\documentclass{article} % For LaTeX2e
\usepackage[margin=1in]{geometry}
%\usepackage{iclr2022_conference,times}

% Optional math commands from https://github.com/goodfeli/dlbook_notation.
%\input{math_commands.tex}

%\usepackage{authblk}
%\usepackage{hyperref}
% \usepackage[colorlinks=true,citecolor=blue, linkcolor=blue]{hyperref}
% \usepackage{url}

% \usepackage{xspace,amsmath,amssymb,amsthm,amsfonts,epsfig,syntonly,dsfont,pifont}
% \usepackage{bm,color,url,textcomp,enumitem}
% \usepackage{algorithmicx,algorithm,algpseudocode}
% \usepackage{epstopdf}
% \usepackage{empheq,float}
% \usepackage{graphicx,subfigure,balance}
% \usepackage{multirow}
% \usepackage{float}
% %\usepackage[round]{natbib}
% \usepackage{natbib}

% \graphicspath{{./figs/}}
% \usepackage{wrapfig}

% \newtheorem{assumption}{Assumption}
% \newtheorem{proposition}{Proposition}
% \newtheorem{question}{Question}
% \newtheorem{lemma}{Lemma}
% \newtheorem{corollary}{Corollary}
% \newtheorem{theorem}{Theorem}
% \newtheorem{definition}{Definition}
% \theoremstyle{definition}\newtheorem{remark}{Remark}
% \newcommand*\widefbox[1]{\fbox{\hspace{1em}#1\hspace{1em}}}
% \newcommand\red[1]{{\color{red}#1}}
% \newcommand\blue[1]{{\color{blue}#1}}
% \newcommand\green[1]{{\color{PineGreen}#1}}
% \DeclareMathOperator*{\argmax}{arg\,max}
% \DeclareMathOperator*{\argmin}{arg\,min}

%%%
\usepackage{xspace} 
\usepackage{microtype} 
\usepackage{graphicx}
\usepackage{subfigure}
\usepackage{booktabs} % for professional tables %\usepackage{enumitem}
\usepackage{amsfonts} % blackboard math symbols
\usepackage{amsmath,amsthm} 
\usepackage{amssymb,amsmath,amsthm}
\newtheorem{theorem}{Theorem}
\newtheorem{lemma}{Lemma}
\usepackage{mathtools} % Bonus

\usepackage{algorithm}
\usepackage{algorithmic} 
\usepackage{float} 
\restylefloat{table}
\usepackage{caption} 

\newcommand{\bbR}{\mathbb{R}} 
\newcommand{\bbE}{\mathbb{E}} 
\usepackage{stfloats} 
\usepackage{threeparttable} 
\usepackage{comment} 
\usepackage{xcolor} 
\usepackage{lmodern} 
\setlength{\parindent}{0cm} 
\usepackage{wrapfig} 
\usepackage{enumitem}
\usepackage{caption} 
\usepackage{wrapfig} 
\usepackage{soul}
% \urlstyle{same}

\allowdisplaybreaks 
% Attempt to make hyperref and algorithmic work together better: 

\newcommand{\Mada}{\texttt{MADA}\xspace}
\newtheorem*{claim*}{Claim} 
\newtheorem*{corollary*}{Corollary}
\newtheorem{proposition}{Proposition}

\newtheorem{fact}{Fact}
\usepackage[colorlinks=true,citecolor=blue, linkcolor=blue]{hyperref}
\usepackage{natbib}
\usepackage{capt-of}
\usepackage[]{graphicx}
\usepackage[capitalize,noabbrev]{cleveref}

\graphicspath{{./figs/}}

\usepackage[textsize=tiny]{todonotes}
 % Brano's comments

\newcommand{\kaancr}[1]{{{#1}}}
% \newcommand{\kaanold}[1]{{\color{red}{\bf [Old: #1]}}}

%% Macros to disable comments
% \newcommand{\todob}[1]{}

\newcommand{\kaan}[1]{{{#1}}}
\newcommand{\kaans}[1]{{{#1}}}
%%%

%\date{}
\title{\Mada: Meta-Adaptive Optimizers through hyper-gradient Descent\footnotetext{The work of KO was done while interning at Amazon Web Services. SS, MH, and VC hold concurrent appointments as an Amazon Scholar and as a faculty at Technion, University of Minnesota, and EPFL, respectively. This paper describes their work performed at Amazon.}}

% Authors must not appear in the submitted version. They should be hidden
% as long as the \iclrfinalcopy macro remains commented out below.
% Non-anonymous submissions will be rejected without review.

% \author[1]{XYZ}
% \author[1]{Kaan Ozkara}
% \author[2]{Can Karakus}
% \author[2]{Parameswaran Raman}
% \author[2]{Mingyi Hong}
% \author[2]{Shoham Sabach}
% \author[2]{Branislav Kveton}
% \author[2]{Volkan Cevher}

% \affil[1]{University of California Los Angeles, \texttt{kaan@g.ucla.edu}}
% \affil[2]{Amazon Web Services, \texttt{\{cakarak,prraman,hongpaul,ssabach,bkveton\}@amazon.com}, \texttt{volkcevh@amazon.de}}
% % \affil[3]{XYZ}

\author{Kaan Ozkara\thanks{Department of Electrical and Computer Engineering, University of California Los Angeles\ \{kaan@ucla.edu\}} 
\and 
        Can Karakus\thanks{Amazon Web Services \ \{cakarak@amazon.com, prraman@amazon.com, bkveton@amazon.com, volkcevh@amazon.com\}}
\and 
        Parameswaran Raman$^\dag$
\and 
        Mingyi Hong\thanks{Department of Electrical and Computer Engineering, University of Minnesota \ \{mhong@umn.edu\}}        
\and 
        Shoham Sabach\thanks{Faculty of Data and Decision Sciences, Technion -- Israel Institute of Technology \ \{ssabach@technion.ac.il\}}
\and
        Branislav Kveton$^\dag$
\and    
        Volkan Cevher$^{\dag,}$\thanks{LIONS, IEM, STI, Ecole Polytechnique Fédérale de Lausanne \ \{volkan.cevher@epfl.ch\}}
}

% The \author macro works with any number of authors. There are two commands
% used to separate the names and addresses of multiple authors: \And and \AND.
%
% Using \And between authors leaves it to \LaTeX{} to determine where to break
% the lines. Using \AND forces a linebreak at that point. So, if \LaTeX{}
% puts 3 of 4 authors names on the first line, and the last on the second
% line, try using \AND instead of \And before the third author name.

\begin{document}
\date{} %Date omission only works if it comes before
\maketitle

\begin{abstract}
Following the introduction of Adam, several novel adaptive optimizers for deep learning have been proposed. These optimizers typically excel in some tasks but may not outperform Adam uniformly across all tasks. In this work, we introduce Meta-Adaptive Optimizers (\Mada), a unified optimizer framework that can generalize several known optimizers and dynamically learn the most suitable one during training. The key idea in \Mada is to parameterize the space of optimizers and dynamically search through it using hyper-gradient descent during training. We empirically compare \Mada to other popular optimizers on vision and language tasks, and find that \Mada consistently outperforms Adam and other popular optimizers, and is robust against sub-optimally tuned hyper-parameters. \Mada achieves a greater validation performance improvement over Adam compared to other popular optimizers during GPT-2 training and fine-tuning. We also propose AVGrad, a modification of AMSGrad that replaces the maximum operator with averaging, which is more suitable for hyper-gradient optimization. Finally, we provide a convergence analysis to show that parameterized interpolations of optimizers can improve their error bounds (up to constants), hinting at an advantage for meta-optimizers.
\end{abstract}

% \newpage
% % \tableofcontents
% \newpage

\section{Introduction}
The choice of an optimization algorithm plays a critical role in  determining the downstream performance of a machine learning model. Adaptive moment optimizers are the most preferred class of optimizers employed in most learning tasks such as training Large Language Models (LLMs) \citep{brown2020language, touvron2023llama} and Diffusion Models \citep{rombach2022high}. In particular, Adam \citep{KingmaB14} is still the ``go-to'' optimizer in LLM training, despite the emergence of many other optimizers since then.

Recently proposed optimizers \citep{chen2023symbolic, xie2023adan, liu2023sophia, You2020Lamb, foret2021sam} report improved performance compared to Adam in specific tasks. However, it is  unclear if their strong performance generalizes across a wide range of tasks as Adam's does \citep{schmidt21a}. It is also unclear if a single optimizer can be uniformly the best across all learning tasks and training regimes, such as different batch sizes, hyper-parameters, and datasets.

In this work, we introduce the concept of a \emph{parameterized optimizer}, which can be viewed as a unified parameterization of a collection of given optimizers.
\kaan{The parameters of a parameterized optimizer define a convex polytope (e.g. hypercube), whose vertices may correspond to individual base optimizers, while the interior represents new optimizers formed by interpolating between them. %The coefficients assigned to each underlying optimizer are adjustable.
}
% \kaanold{Specifically, the updating rules of a parameterized optimizer are a convex combination of those of the underlying optimizers, where the coefficient assigned to each underlying optimizer is learnable.} 
% \todob{We never present this as a convex combination. All optimized hyper-parameters live in a hypercube.} 
To make this concept operational, we propose the meta-adaptive optimizer (\Mada), which combines the parameterized optimizer with hyper-gradient descent \citep{baydin2018online} to learn the optimizer coefficients. \Mada dynamically adjusts the parameterized optimizer coefficients \emph{during} training, and effectively adapts the optimizer choice to the learning task. While \Mada bears some connections to optimizer search methods, such as \citep{chen2023symbolic}, it does not solely output a final optimizer state. It dynamically adapts the optimizer to the current neighborhood of the loss landscape on-the-fly, removing the need for an offline optimizer search stage, or an outer hyper-parameter selection loop.

% In this work, we propose a meta-adaptive optimizer (\Mada) that parameterizes the space of optimizers by interpolating between their update rules, dynamically learns an interpolated optimizer, and adjusts optimizer hyper-parameters during training using hyper-gradient descent \citep{baydin2018online}. While \Mada bears some connections to optimizer search methods \citep{chen2023symbolic}, it does not solely output a final optimizer state, but it dynamically adapts the optimizer to the current neighborhood of the loss landscape on-the-fly. % The dynamic nature of \Mada removes the need for re-running the experiment with the learned optimizer state; however, in some settings the learned optimizer performs slightly better as we will see in the experiments section. 

% To enable learning interpolations of different optimizers, we first need to come up with a formulation which encapsulates the desired optimizers. In this paper, we provide a particular formulation that performs well in various language tasks. However, this formulation is not fixed and can be improved by adding more optimizers according to the desired task.  
% \mingyi{a bit unclear the relation between MADA and AVGrad; ; shall we say something like, we identify a specific instance within the famil}.
\textbf{Contributions.} We make the following contributions:
\begin{itemize}
    \itemsep0em
    \item We introduce the concept of a parameterized optimizer, which takes a collection of existing optimizers, and unifies them into a single optimizer, combining the individual update rules through learnable coefficients.
    \item We propose \Mada, a meta-optimization framework that combines parameterized optimizers with hyper-gradient descent to learn a specific optimizer instance during training.
    \item \kaan{We find that not all optimizers are suitable to use in a hyper-gradient optimization framework. Specifically, among popular optimizers, using AMSGrad \citep{reddi2018} results in poor performance within \Mada due to its use of the maximum operator. Motivated by this, we propose a modification of it called AVGrad, which replaces the maximum operator on the second-order moments with time-averaging, and leads to better performance when used as part of \Mada. We also analyze its convergence properties (see Appendix~\ref{app:avgrad}). We show that \Mada converges to AVGrad in a simple example where it is known to perform better than Adam, providing evidence for the effectiveness of \Mada in adapting the optimizer to the task (see Appendix~\ref{app:experiments}).}
    \item \kaan{To demonstrate that optimizer interpolations can improve convergence bounds in an analytically tractable scenario, we theoretically analyze the convergence behavior for interpolations between two optimizers, namely AVGrad and Adam. Our analysis shows that the interpolated optimizer improves the convergence bounds of the base optimizers up to constant factors. }
    \item We develop a specific parameterized optimizer, which interpolates between Adam \citep{KingmaB14}, AVGrad, Yogi \citep{zaheer2018adaptive}, Adan \citep{xie2023adan}, and Lion \citep{chen2023symbolic}. On language tasks we compare \Mada, which is based on this parameterized optimizer, against Adam, Lion, Adan, and HyperAdam\footnote{Throughout this paper we will refer to the version of Adam that uses hyper-gradients to tune $\beta_1$ and $\beta_2$ parameters, as in \cite{chandra2022gradient}, as HyperAdam.}, and show that \Mada consistently outperforms all baselines. We also illustrate the robustness of \Mada to initial hyper-parameters and analyze the evolution of hyper-parameters during training. \kaans{On vision tasks we compare \Mada to Adam, SGD with momentum, and HyperAdam, and observe consistent performance improvement.}
    %\item We propose \Mada, a new unified optimizer framework that can generalize several known optimizers and can dynamically learn the most suitable one during training using hyper-gradients. %-based meta-optimization.
    %\item Using \Mada, we develop a new meta-optimizer parameterization. Through our experiments on GPT-2 models we show that such parameterization has some advantages for practitioners training language models in terms of performance as well as robustness to hyper-parameter choices.
    %\item For \Mada, we propose a new optimizer AVGrad, modifying AMSGrad by replacing the maximum operator on the second moment estimate with time-averaging, resulting in better performance. We provide a convergence analysis for AVGrad and its interpolation with Adam. 
    %\item We empirically compare \Mada against Adam and HyperAdam \footnote{Throughout this paper we will refer to the version of Adam that uses hyper-gradients as HyperAdam.} \cite{chandra2022gradient}, 
    %\can{if HyperAdam is a name that we came up with, we should explain here},
    %and show that \Mada outperforms Adam \textbf{vc: problematic sentence. our natural comparison point is hyperadam and not adam! the sentence implicity implies that we beat adam but do not beat hyperadam. if we do not, then our paper is weak unless we are more robust than hyperadam. in this case, it needs to be explicitly stated.}. We also illustrate the robustness of \Mada to initial hyper-parameters and analyze the evolution of hyper-parameters during training. 
\end{itemize}
 
%\textbf{Outline.} In Section~\ref{sec:dsp}, we present the concept of a parameterized optimizer and describe how a collection of existing optimizers can be unified into a single optimizer. In Section~\ref{sec:mada}, we describe the \Mada framework and discuss how hyper-gradients can be used to learn the coefficients of a parameterized optimizer. In Section~\ref{sec:avgrad}, we introduce AVGrad and present our theoretical convergence bounds. In Section~\ref{sec:experiments}, we present the experimental results and discuss how the learnable coefficients evolve. In Section~\ref{sec:discussions}, we present our conclusions. Lastly, we defer proofs, additional experiments and experimental details to Appendix.
\textbf{Related work.}
%\prraman{We could also include some popular literature from hyper-parameter optimization e.g. Hyper-band and Bi-level optimization for tuning hyper-parameters}.
Motivated by the high cost of training large language models, a large number of novel optimizers have been proposed in recent years to speed up the training, increase generalization performance or train more resource-efficient models. As mentioned before, Adam \citep{KingmaB14} is the most commonly employed optimizer, and succeeding methods, in general, try to improve upon it under different scenarios. \cite{reddi2018,zaheer2018adaptive} propose AMSGrad and YOGI, respectively, to fix Adam's potentially non-decreasing effective learning rate. \cite{xie2023adan,dozat2016nadam} introduce Adan and Nadam, respectively, to replace heavy-ball momentum in Adam with Nesterov momentum. \cite{you2017lars, You2020Lamb} propose LARS and LAMB to improve the performance of Adam in large-batch regime. \cite{heo2021adamp} proposes AdamP to avoid premature decay of scale-invariant weights. AdaBound \citep{luo2018adabound} and AdaBelief \citep{zhuang2020adabelief} try to estimate a more stable second-order moment term. \cite{foret2021sam} proposes sharpness-aware minimization (SAM) and \cite{chen2020padam} proposes Padam to increase the generalization performance of the trained models. \cite{Liu2020Oradam} stabilizes the training by reducing gradient variance. The work in \citep{chen2023symbolic} is similar in spirit to our work, in that it introduces a method to symbolically search a space of optimizers; however unlike our work they consider an offline search method, whereas our method learns the optimizer \emph{during} actual model training, and uses hyper-gradients. 

\kaancr{A separate line of work proposed learned optimizers to delegate the optimization task to neural networks (fully connected or LSTMs) \citep{almeida2021generalizable,metz2022velo,metz2020tasks}. Instead of directly using first order information as in optimizers, these methods treat gradient information, alongside other features such as training loss, validation loss \citep{almeida2021generalizable} and so on, as inputs to a neural network which outputs the update to be applied on a particular weight. Learned optimizers require resource-intensive offline training (e.g. thousands of TPU-months in \citep{metz2022velo}) since the updates are handled through a separate neural network that needs to be trained before deployment. Integration of a separate neural network to the optimization process introduces additional complexity, which \Mada avoids by requiring only a few parameters to be learned on-the-fly. }

Another related line of work is on gradient-based hyper-parameter optimization \citep{almeida1999,maclaurin2015gradient, franceschi2017forward, baydin2018online, chandra2022gradient}; our work applies a similar idea in a new setting, namely optimizer search and adaptation. Finally, we note that our work more broadly relates to a long line of research on AutoML and meta-learning \citep{andrychowicz2016learning, wichrowska2017learned, hospedales2021meta, real2020automlzero}.

% Hyper-gradient refers to the gradients of hyper-parameters, such as learning rate, with respect to the training loss. \cite{baydin2018online} illustrates manual computation of the hyper-gradients by taking the derivative of update rule with respect to hyper-parameters. Later \cite{chandra2022gradient}, uses automatic differentiation and embeds hyper-gradient computation into backpropagation. This allows, automatic computations of hyper-gradients, for example, for $\beta_1, \beta_2$ in Adam.

\section{Parameterized Optimizers}\label{sec:dsp}
We focus on minimizing a loss function $F : \bbR^{d} \rightarrow \bbR$, that is, optimization problems of the following form
$$    \min_{x \in \mathbb{R}^d} F(x).
$$
Throughout the paper, we denote by $f : \bbR^{d} \rightarrow \bbR$ a random function computed on a minibatch sampled from the underlying data distribution. Therefore, for any $x\in \bbR^{d}$, we have  $\bbE \left[f(x)\right] = F(x)$. Moreover, we assume $F$ is differentiable and that $\bbE \left[\nabla f(x)\right] = \nabla F(x)$ for all $x\in \bbR^{d}$. We use $f_t$ to denote the random function evaluated with input mini-batch sampled at $t$. 

%In this paper, like any classical optimization paper for deep learning, we focus on minimizing a loss function $F : \bbR^{d} \rightarrow \bbR$, that is, optimization problems of the following form
%\begin{equation*}
%    \min_{x \in \mathbb{R}^d} F(x).
%\end{equation*}
%Throughout the paper, we denote by $f : \bbR^{d} \rightarrow \bbR$ a random function computed on a minibatch sampled from the underlying data distribution. Therefore, for any $x\in \bbR^{d}$, we have  $\bbE \left[f(x)\right] = F(x)$. Moreover, in case that $F$ is in addition differentiable, we also have that $\bbE \left[\nabla f(x)\right] = \nabla F(x)$ for all $x\in \bbR^{d}$.

As discussed in the introduction, various optimizers were proposed in the literature, each excelling in different scenarios. The main goal in this work is to exploit the relationships between given optimizers towards the development of a meta-optimizer that automatically adapts the optimizer choice to the learning task. Informally, a parameterized optimizer can be described as the convex hull of a set of optimizers, when mapped to a Euclidean space under a certain parameterization. This section is devoted to explaining this notion in more detail, before Section~\ref{sec:mada} discusses how to perform learning in this Euclidean space.

% convex combination of several existing optimizers, combined using several learnable coefficients. Each of the existing optimizers can be recovered by choosing specific fixed values of these coefficients (see below for examples). However, the meta-optimizer can also learn these coefficients in a online fashion, as will be shown in Section~\ref{sec:mada}. 
% This section is devoted for explaining this new notion in details, since it stands at the heart of our meta-optimizer.

In order to build a parameterized optimizer, we begin with a collection of existing optimizers. For the sake of concreteness, we illustrate the idea through the following four optimizers: Adam, AMSGrad, Adan and Yogi. 
%First, recall that in essence all these optimizers (as well as most of the optimizers used in deep learning) use gradient-based parameter updates, and thus they have update rules that are closely related to each other.
A careful inspection reveals that these four optimizers can be described in one update rule that involves three iteratively generated sequences: model parameters, first-order moments, and second-order moments, which will be denoted throughout using the vectors $x$, $m$, and $v$, respectively. Each optimizer only differs in the way it updates these sequences. More precisely, starting with any $x_{0} \in \bbR^{d}$ and setting $v_{0} = m_{0} = 0$, the generic update rule is given by
\begin{align} \label{eq:generic_update}
    x_{t} = x_{t-1} - \alpha_{t}\frac{m_{t}}{\sqrt{v_{t}} + \epsilon},
\end{align}
where $\alpha_{t} > 0$ is the learning rate and $\epsilon > 0$ is given. \cref{tab:methods} shows how each optimizer defines its first- and second-moment iterates within this formulation, where we define $g_{t} := \nabla f_{t}(x_{t-1})$. For simplicity of the exposition, we omit the bias-correction terms for all optimizers.

\begin{table*}[t]
\centering
\begin{tabular}{cccc}
\hline
  Method & First-Order Moment & Second-Order Moment \\ \hline
  Adam \citep{KingmaB14} & $m_{t} = \beta_1 m_{t-1} + (1-\beta_1) g_t$  & $v_{t} = \beta_2 v_{t-1} + (1-\beta_2) g_t^2$ \\ \hline
  AMSGrad \citep{reddi2018} & $m_{t} = \beta_1 m_{t-1} + (1-\beta_1) g_t$ & \begin{tabular}{@{}c@{}} $\bar v_t = \beta_2 \bar v_{t-1} + (1-\beta_2) g_t^2$ \\ $v_{t} = \max\{v_{t-1}, \bar v_t\}$\end{tabular}   \\ \hline
  Adan \citep{xie2023adan}  & \begin{tabular}{@{}@{}c@{}} ${\bar m}_{t} = \beta_1 {\bar m}_{t-1} + (1-\beta_1) g_t$ \\ $n_t = \beta_3 n_{t-1} + (1-\beta_3)(g_t-g_{t-1})$ \\ $m_t = \bar{m}_t + \beta_3 n_t$ \end{tabular} & \begin{tabular}{@{}c@{}} $\hat g_t = g_t + \beta_3(g_t - g_{t-1})$ \\ $v_t = \beta_2 v_{t-1} + (1-\beta_2) \hat g_t^2$ \end{tabular} \\ \hline
  Yogi \citep{zaheer2018adaptive} & $m_{t} = \beta_1 m_{t-1} + (1-\beta_1) g_t$ & \begin{tabular}{@{}c@{}} $\hat g_t = v_{t-1}+ g_t^2\cdot\text{sign}( g_t^2 - v_{t-1})$ \\ $v_t = \beta_2 v_{t-1} + (1-\beta_2) \hat g_t$ \end{tabular} \\ 
\end{tabular} 
\captionof{table}{A unified framework to express adaptive moment optimizers.} \label{tab:methods}
\end{table*}
To be able to parameterize the design space of these four optimizers, we introduce real coefficients restricted between $0$ and $1$ that interpolate terms arising from different optimizers. Particular choices of these coefficients (typically at extreme values of 0 and 1) recover the underlying optimizers, but they express a new optimizer for their intermediate values. 

As an example, observe that the first-order moment update rule of Adan already subsumes those of Adam, AMSGrad, and Yogi (where $\{\bar{m}_t\}$ and $\{n_t\}$ are new sequences
introduced by Adan):
\begin{align}
    {\bar m}_{t} & = \beta_{1}{\bar m}_{t - 1} + (1 - \beta_{1})g_{t} \nonumber \\ 
    n_t & = \beta_{3}n_{t - 1} + (1 - \beta_{3})(g_{t} - g_{t - 1}) \nonumber \\ 
    m_t & = \bar{m}_{t} + \beta_{3}n_{t}, \label{fstAdan}
\end{align}
where taking $\beta_{3} = 0$ recovers the update rules of the other three optimizers. With respect to the second-order moments, Adan similarly covers Adam when $\beta_{3} = 0$, since we get that ${\hat g}_{t} = g_{t}$. %The second-order moments of AMSGrad and Yogi require introducing two additional coefficients. Therefore, 
In order to incorporate the second-order moments of AMSGrad and Yogi in the same parameterization, we introduce two new coefficients $c , \rho \in [0, 1]$, and unify the second-moment computation as follows:
\begin{align}
    {\hat g}_{t} & = g_{t} + \beta_3(g_{t} - g_{t-1}) \nonumber \\
    {\tilde g}_{t}^2 & = c{\hat g}_{t}^{2} + (1 - c)(v_{t-1} + {\hat g}_{t}^{2}\cdot\text{sign}({\hat g}_{t}^{2} - v_{t-1})) \nonumber \\
    {\tilde v}_{t} & = \beta_{2}{\tilde v}_{t - 1} + (1 - \beta_{2}){\tilde g}_{t}^{2} \nonumber \\
    v_{t}^{(max)} & = \max\{ {v_{t-1}^{(max)} , \tilde v}_{t} \} \nonumber \\
    v_{t} & = \rho{\tilde v}_{t} + (1 - \rho)v_{t}^{(max)}. \label{2ndPara}
\end{align}
It is easy to check that, for instance, the second-order moments of Adam can be recovered when $\beta_{3} = 0$, and $c = \rho = 1$; those of AMSGrad are recovered when $\beta_{3} = \rho = 0$ and $c = 1$; Adan can be recovered when $c=\rho=1$; and Yogi can be recovered with $\beta_3=c=0$ and $\rho=1$.
% Moreover, it should be noted that at the moment we keep all the coefficients fixed, but in our meta-optimizer they will be learned and therefore dynamic (more details in Section 3 below). 

To summarize, in this example, for the four optimizers Adam, AMSGrad, Adan and Yogi, our parameterized optimizer is given by the three updating rules: \eqref{eq:generic_update}, \eqref{fstAdan} and \eqref{2ndPara}. Following the same line of arguments one can generate parameterized optimizers for other collections of given optimizers as well. However, unless we know a priori how to set the corresponding coefficients, the parameterized optimizer is not readily usable in practice. In the next section, we develop our meta-optimizer which uses a parameterized optimizer and learns the coefficients in an online fashion.

\section{Meta-Adaptive Optimizers} \label{sec:mada}
In the previous section, we described how one can parameterize the design space of a given collection of optimizers. Here, we define \Mada as a meta-optimization framework that dynamically learns the coefficients of a parameterized optimizer during training. In this work, we use hyper-gradient descent \citep{baydin2018online,chandra2022gradient} to learn these coefficients, which avoids an expensive hyper-parameter optimization loop that involves multiple training runs. However, we note that in principle other techniques can also be combined with parameterized optimizers to learn the coefficients.

\textbf{Learning interpolation coefficients.} Hyper-gradient descent views the hyper-parameters as trainable parameters of the loss function, and thus differentiates the loss with respect to them and updates them through gradient steps. \cite{chandra2022gradient} re-purposes PyTorch \texttt{autograd} machinery \citep{paszke2017automatic} to automatically compute gradients with respect to optimization hyper-parameters such as learning rate and Adam $\beta_1$ and $\beta_2$ parameters. Since the update rule of a parameterized optimizer is differentiable with respect to its learnable coefficients, we can apply the same approach to update them using hyper-gradient descent steps. 

Now, we describe our meta-optimizer \Mada in detail. %As mentioned above, \Mada takes as input a parameterized optimizer. For the development of \Mada 
We will denote a parameterized optimizer by $\mathcal{O}_{q}$, where $q \in \mathcal{D}$ denotes the vector of coefficients that defines the optimizer, and $\mathcal{D}$ represents the domain of the vector $q$. In the case of the example from Section~\ref{sec:dsp}, we have that $q = (\beta_1, \beta_2, \beta_3, \rho, c)$ and $\mathcal{O}_{q}$ is given by \eqref{eq:generic_update}, \eqref{fstAdan}, and \eqref{2ndPara}. Note that $\mathcal{O}_q$ also encapsulates non-learnable state parameters (such as first-order and second-order moments) and other hyper-parameters such as the learning rate, weight decay, and stability parameter. The domain $\mathcal{D}$ represents the set of values that the vector $q$ is allowed to take in the parameterization. In the example from Section~\ref{sec:dsp}, the domain is the unit hypercube where each element is in the range $[0, 1]$. We denote by $\Pi_{\mathcal{D}}$ the orthogonal projection onto the set $\mathcal{D}$. We provide a pseudo code to illustrate this (see Algorithm \ref{alg:mada}).
\begin{algorithm}[h]
\caption{Pseudocode for a generic \Mada}
    \label{alg:mada}
\textbf{Input:} A parameterized optimizer $\mathcal{O}_{q}$, where $q \in \mathcal{D}$, a hyper-learning rate $\alpha$, number of total iterations $T$. \\
\textbf{Init.:} $x_0$ and $q_{0}$.
\begin{algorithmic}[1]
    \FOR{t=1 to T}
        \STATE {Sample $f_t$.}
        \STATE {Update the model parameters:} 
        $$x_t = \mathcal{O}_{q_{t-1}}(x_{t-1}).$$
        \STATE {Update the optimizer coefficients:}
        $$q_t = \Pi_{\mathcal{D}}\left[q_{t-1} - \alpha \nabla_{q} f_t(x_{t-1})\right].$$
    \ENDFOR
\end{algorithmic}
\textbf{Output:} Model wights $x_{T}$. 
\end{algorithm}

\textbf{Hyper-gradient computation.} Before concluding this section, we briefly illustrate hyper-gradient computation\footnote{Further details on hyper-gradients can be found in \citep{baydin2018online,chandra2022gradient}.}. In order to compute the hyper-gradient of the loss with respect to a particular optimizer coefficient, we treat the updated model weights as a function of the coefficient. For instance, considering again the example from Section~\ref{sec:dsp}, we will show how to compute the gradient of the function $f_{t}$ with respect to the coefficient $\rho$ using the parameterized optimizer as given in \eqref{2ndPara}. Using the chain rule, we obtain
\begin{align}
    \frac{\partial f_{t+1}(x_t)}{\partial \rho} & = \frac{\partial f_{t+1}(x_t)}{\partial x_{t}}\cdot\frac{\partial x_{t}}{\partial v_{t}}\cdot\frac{\partial v_{t}}{ \partial \rho} \nonumber \\
    &= \frac{\partial f_{t+1}(x_t)}{\partial x_{t}}\cdot\frac{\partial (x_{t-1} - \alpha_{t}\frac{m_t}{\sqrt{v_t}+\epsilon})}{\partial v_{t}}\cdot\frac{\partial v_{t}}{ \partial \rho} \nonumber \\
    &=\frac{\partial f_{t+1}(x_t)}{\partial x_{t}}\cdot\frac{\alpha_t m_{t}}{2\sqrt{v_t}(\sqrt{ v_{t}}+\epsilon)^2}\cdot\left(\tilde v_t-v_{t}^{(max)}\right) \label{eq:hypergrad},
\end{align} 
where the first term $\partial f_{t+1}(x_t)/\partial x_{t}$ is readily computed using standard back-propagation\footnote{In \eqref{eq:hypergrad}, the first term is a row vector, the second term is a Jacobian matrix, and the third term is a column vector, and thus the $\cdot$ operator refers to a matrix multiplication.}. Note that the latter two objects of \eqref{eq:hypergrad} are not explicitly constructed in practice; instead \texttt{autograd} simply \emph{continues} back-propagating the already-computed parameter gradients into optimizer coefficients, while handling the necessary book-keeping for hyper-gradient computation. 

\section{On Convergence of Interpolated Optimizers}\label{sec:avgrad}

% \todob{Introduction of this section needs work. Currently it breaks the flow and comes across as follows:

% * Our framework does not work with all optimizers.

% * So we replace the bad optimizer with a new invented one.

% * We also analyze it.

% The issue of whether AMSGrad can be used with MADA is orthogonal. You want to do this differently:

% * We want to analyze MADA. This is difficult in general.

% * Therefore, we reduce the scope to showing that an interpolation between two optimizers can help. Since theory is limited, we choose two particular optimizers.

% * Motivate the choice of AVGrad by what theory tools are available. This should also answer the question why we analyze AVGrad and not AMSGrad.}

% {\mingyi{I agree that the flow needs a bit rework; note the AVGrad, as well as the title of this section, seems very abtrupt. it may help that in the end of Section 3., introduce what we plan to do next. This way we have to connect AVGrad with MADA there.}\mingyi{I think we should not motivate from Reddi et al and AMSgrad; we can say that in this  section, we show that the parameterized optimizer introduced in previous section not only can be used to reduce to exissting algorithms, but importantly, it can be used to design new algorithms. Then we can start to introduce the algorithm, performance analysis, etc. Will this flow better?}}

Since \Mada uses hyper-gradients to update the optimizer, a prerequisite for a base optimizer to work well within \Mada is that its update rules must allow efficient flow of hyper-gradients to the parameterization coefficients. In particular, in our experiments with \Mada we have found that including AMSGrad among the base optimizers has an adverse effect on the end performance of the trained model. We conjecture that this is caused by the maximum operator in the second-moment term. Specifically, note that back-propagation of gradients through $\max(a, b)$ corresponds to a simple routing operation to the larger one of $a$ and $b$, where the smaller one is passed $0$ gradients. In AMSGrad, $v_t = \max\left\{ v_{t-1}, \bar v_t\right\}$ which means that for most steps the first term will be greater, causing insufficient hyper-gradient updates on $\beta_2$ parameter through $\bar v_t$ path\footnote{To be accurate, considering the example parameterization \eqref{2ndPara}, $\tilde v_t$ term provides another path for hyper-gradients on $\beta_2$. However, in practice we observed that when AMSGrad is used, $\rho$ parameter tends to vanish, diminishing the effect of this path as well.}. To remedy this, we introduce AVGrad (formally defined in next section), which replaces the maximum operator with time-averaging of second moments, and results in better hyper-gradient flow and validations loss.

\subsection{AVGrad and its interpolation with Adam} 

% \textbf{Motivation for AVGrad. } \kaan{AMSGrad \citep{reddi2018} employs maximum operator to prevent increasing effective learning rates. We have $v_t = \max\left\{ v_{t-1}, \bar v_t\right\}$ which means that for most steps the first term will be greater, causing insufficient hyper-gradient updates on $\beta_2$ parameter through $\bar v_t$ path\footnote{To be accurate, considering the example parameterization \eqref{2ndPara}, $\tilde v_t$ term provides another path for hyper-gradients on $\beta_2$. However in practice we have observed that when AMSGrad is used, $\rho$ parameter tends to vanish, diminishing the effect of this path as well.}. }

Following the  notation in \cref{tab:methods}, we define AVGrad as an optimizer where the first-order moments and second-order moments are defined by
% \begin{align}
%     m_t &=\beta_1 m_{t-1}+(1-\beta_1) g_t \notag \\
%     \bar v_t &= \beta_2v_{t-1}+(1-\beta_2)g_t^2 \notag \\
%     \tilde v_t &= \frac{1}{t}(\bar v_{t}+(t-1)v_{t-1}), \notag \\
%     v_t &= \tilde v_t \label{eq:lastline},
% \end{align}
\begin{align}
    m_t &=\beta_1 m_{t-1}+(1-\beta_1) g_t \notag \\
    \bar v_t &= \beta_2 \bar v_{t-1}+(1-\beta_2)g_t^2 \notag \\
    \tilde v_t &= \frac{1}{t}(\bar v_{t}+(t-1) \tilde v_{t-1}), \notag \\
    v_t &= \tilde v_t \label{eq:lastline},
\end{align}
%\vspace{-8pt}
where $\tilde v_t$ is the running average of past $\overline v_t$'s. We provide the convergence analysis of AVGrad in Appendix~\ref{app:avgrad} in \cref{thm1,thm2}  %\can{anything interesting to point out above what this theorem says?} % In the next section we analyze the convergence properties of the interpolation of Adam and AVGrad.

In what follows, we focus on the interpolation between Adam and AVGrad, with the interpolation coefficient $\rho_t$. Specifically, we replace last line in \eqref{eq:lastline} with $v_t := \rho_t \bar v_t + (1-\rho_t) \tilde v_t$ where $\rho_t$ interpolates between second-order moments of Adam and AVGrad. 
%, and $\bar v_t$ and $\tilde v_t$ are as defined above.
% \begin{align*}
%     v_t &= \rho_t \bar v_t + (1-\rho_t) \tilde v_t,
% \end{align*}

\textbf{Soundness of AVGrad.} 
%As stated before, \cite{reddi2018} shows that Adam may result in non-decreasing effective learning rate, and thus may converge to sub-optimal solutions even in simple convex problems. 
We start with a proposition showing that AVGrad is a valid alternative to AMSGrad in mitigating the sub-optimal convergence issue in Adam \citep{reddi2018}. Specifically, when $\rho_t$ decays with $1/t$ (\emph{i.e.}, converges to AVGrad), the interpolated optimizer fixes the non-decreasing learning rate problem in Adam, similar to AMSGrad.
\begin{proposition} \label{prop:avgrad}
    The following inequality is a sufficient condition for non-increasing effective learning rate:
    $$
        \rho_t \leq \frac{1}{t(1-\beta_2)+1}.
    $$
\end{proposition}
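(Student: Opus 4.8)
The plan is to read ``non-increasing effective learning rate'' through the lens of the Reddi et al.\ analysis of AMSGrad. With the customary stepsize schedule $\alpha_t = \alpha/\sqrt{t}$, the effective learning rate attached to each coordinate is $\alpha_t/(\sqrt{v_t}+\epsilon)$, and requiring it to be non-increasing in $t$ amounts to asking that $\sqrt{t\,v_t}$ be non-decreasing; since the $\epsilon$ contribution satisfies $\sqrt{t-1}\,\epsilon \le \sqrt{t}\,\epsilon$ trivially, it suffices to establish the scalar monotonicity $t\,v_t \ge (t-1)\,v_{t-1}$ coordinate-wise. So I would first reduce the statement to proving this monotonicity for the interpolated second moment $v_t = \rho_t \bar v_t + (1-\rho_t)\tilde v_t$, working one coordinate at a time, under the stated bound on $\rho_t$.

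Next I would expose the telescoping structure of the running average. Writing $S_t := \sum_{s=1}^t \bar v_s = t\,\tilde v_t$ and using $S_t = S_{t-1} + \bar v_t$, the quantity $u_t := t\,v_t$ becomes $u_t = [1+(t-1)\rho_t]\,\bar v_t + (1-\rho_t)S_{t-1}$, while $u_{t-1} = (t-1)\rho_{t-1}\,\bar v_{t-1} + (1-\rho_{t-1})S_{t-1}$. Subtracting,
\begin{align*}
u_t - u_{t-1} = [1+(t-1)\rho_t]\,\bar v_t - (t-1)\rho_{t-1}\,\bar v_{t-1} + (\rho_{t-1}-\rho_t)\,S_{t-1}.
\end{align*}
The point of passing to $u_t = t\,v_t$ (rather than $v_t$ itself) is that the bulky history term $S_{t-1}$ enters only through the factor $(\rho_{t-1}-\rho_t)$; since $\rho_t$ decays like $1/t$ we have $\rho_{t-1}\ge\rho_t$, and $S_{t-1}\ge 0$, so this term is non-negative and may be discarded when establishing a lower bound.

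It then remains to control $[1+(t-1)\rho_t]\,\bar v_t - (t-1)\rho_{t-1}\,\bar v_{t-1}$. The only mechanism by which a fresh gradient can shrink the second moment is a vanishing gradient: from $\bar v_t = \beta_2\bar v_{t-1} + (1-\beta_2)g_t^2$ the coefficient of $\bar v_t$ is positive, so the expression is minimized at $g_t = 0$, giving the extremal value $\bar v_t = \beta_2 \bar v_{t-1}$. Substituting, dividing by $\bar v_{t-1}\ge 0$, and freezing the coefficient across the step (i.e.\ treating $\rho_{t-1}=\rho_t$ as a structural sufficient condition) leaves a linear inequality in $\rho_t$; solving it yields a threshold of the form $\rho_t \le \beta_2/[(t-1)(1-\beta_2)]$, which I would then tighten into the clean, dimensionless condition $\rho_t \le 1/[t(1-\beta_2)+1]$ of the proposition. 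This cleaner form has the bonus of automatically lying in $[0,1]$, so it is always an admissible coefficient, and for the typical regime $\beta_2\approx 1$ it is at least as strong as the raw threshold.

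I expect the main obstacle to be the \emph{time dependence} of $\rho_t$: because $\tilde v_t$ averages the entire history, the one-step difference couples $\rho_t$, $\rho_{t-1}$, and---through $S_{t-1}$---all earlier coefficients, so a naive per-step argument does not immediately isolate a condition on $\rho_t$ alone. The resolution is exactly the sign bookkeeping above: the $1/t$ decay makes $(\rho_{t-1}-\rho_t)S_{t-1}\ge 0$, letting me drop the history cleanly, after which only the single-step pair $(\rho_t,\rho_{t-1})$ survives and the worst-case $g_t=0$ substitution closes the argument. A secondary point worth stating explicitly is the reliance on the $\alpha_t\propto 1/\sqrt t$ schedule in the very first reduction; under a constant stepsize the analogous claim $v_t\ge v_{t-1}$ cannot hold uniformly, because the averaging term always carries a residual $-(1-\rho_t)\tilde v_{t-1}/t$ that the history can make negative---which is itself a useful sanity check that the $t\,v_t$ formulation is the correct object to track.
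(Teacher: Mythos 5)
The paper states Proposition~\ref{prop:avgrad} without an accompanying proof, so there is no official argument to compare against; judged on its own terms, your plan has the right skeleton. Reducing ``non-increasing effective learning rate'' to the monotonicity of $t\,v_t$ under $\alpha_t=\alpha/\sqrt{t}$ (the $\sqrt{v_t}/\alpha_t$ condition of Reddi et al.), telescoping through $S_t=t\tilde v_t$ so that the history enters only via $(\rho_{t-1}-\rho_t)S_{t-1}\ge 0$, and taking $g_t=0$ as the extremal case are all the natural moves, and your identity $u_t-u_{t-1}=[1+(t-1)\rho_t]\bar v_t-(t-1)\rho_{t-1}\bar v_{t-1}+(\rho_{t-1}-\rho_t)S_{t-1}$ is correct. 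Your closing sanity check that plain $v_t\ge v_{t-1}$ cannot hold and that $t\,v_t$ is the right object is also a genuinely useful observation.

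The argument does not close, however, at the last two steps. First, ``freezing'' $\rho_{t-1}=\rho_t$ is not a valid relaxation: for a decaying schedule $\rho_{t-1}\ge\rho_t$, so replacing $\rho_{t-1}$ by $\rho_t$ in the negative term $-(t-1)\rho_{t-1}\bar v_{t-1}$ makes the expression \emph{larger}, and non-negativity of the frozen quantity does not imply non-negativity of the true difference; you must either carry an upper bound on $\rho_{t-1}$ explicitly or retain part of the discarded history term via $S_{t-1}\ge\bar v_{t-1}$ to offset it. Second, the ``tightening'' of $\beta_2/[(t-1)(1-\beta_2)]$ into $1/[t(1-\beta_2)+1]$ reverses direction for long horizons: cross-multiplying shows the proposition's bound is smaller than your raw threshold only when $t(1-\beta_2)^2\le 1$, and is strictly larger beyond that, so the final implication fails exactly in the regime $t\gg 1/(1-\beta_2)^2$ where the claim matters. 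This is not merely a bookkeeping issue. Take $\beta_2=0.9$, $t=1000$, $g_s=0$ for all $s\ne t-1$, $g_{t-1}^2=1$, and $\rho_s=1/[s(1-\beta_2)+1]$: then $\bar v_{t-1}=0.1$, $\bar v_t=0.09$, and one computes $t\,v_t\approx 1.0792<1.0891\approx(t-1)v_{t-1}$, so the effective learning rate increases even at the boundary of the stated condition. Carrying the algebra through honestly, the worst case $S_{t-1}=\bar v_{t-1}$, $\bar v_t=\beta_2\bar v_{t-1}$ yields the requirement $\beta_2+[(t-1)\beta_2-1]\rho_t\ge(t-2)\rho_{t-1}$, whose razor-thin solution is $\rho_t\le\beta_2/[t(1-\beta_2)+\beta_2]$ --- the proposition's threshold with an extra factor of $\beta_2$, agreeing with the stated bound only to leading order in $1-\beta_2$. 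You should present that corrected constant (or the condition as a coupled inequality in $\rho_t,\rho_{t-1}$) rather than claim the stated bound follows from your derivation.
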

In Appendix~\ref{app:experiments}, we use \Mada to solve the convex problem given in \citep{reddi2018}, an example where Adam fails. We found that \Mada quickly converges to AVGrad, and thus to the optimum, by adapting $\rho_t \rightarrow 0$, even when we initialize it from Adam ($\rho_0 = 1$).

% \begin{figure*}[t]
% \begin{minipage}{.45\linewidth}
% \centering
%     \includegraphics[scale=0.5]{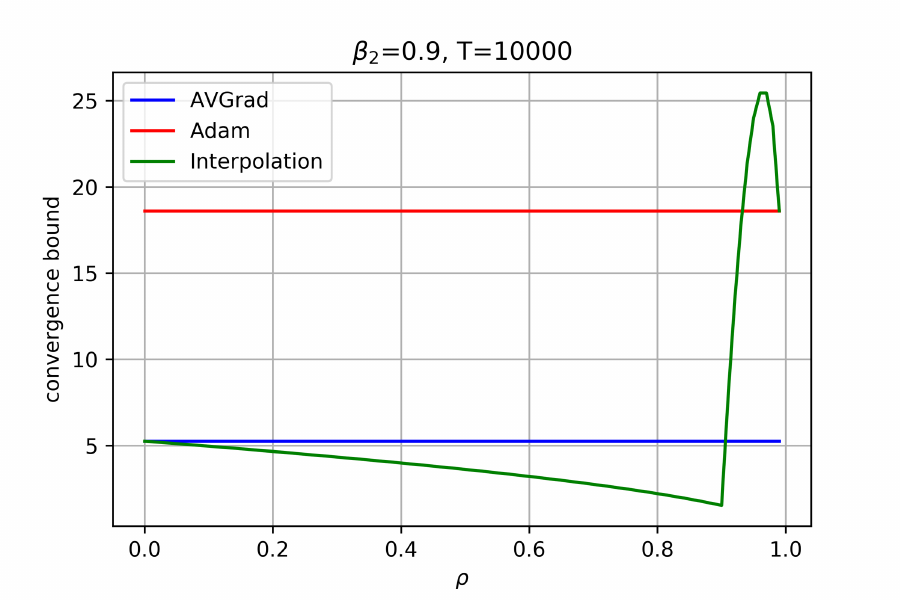}
%     \caption{Value of the bound in Theorem~\ref{thm3} for a representative case where $\beta_2=0.9, T= 10,000$.}
%     \label{fig:bound1}
% \end{minipage}\hfill
% \begin{minipage}{.45\linewidth}
% \centering
%   \includegraphics[scale=0.5]{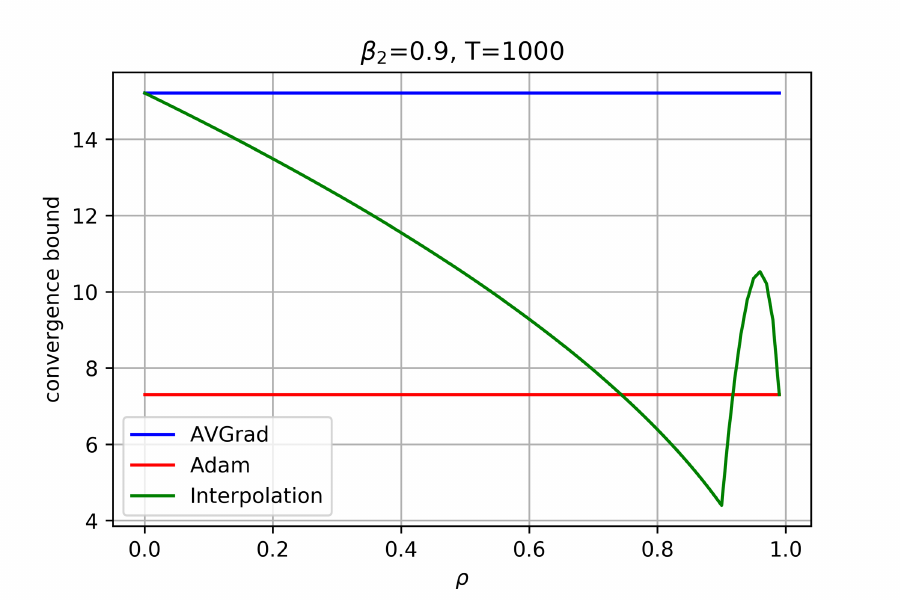}
%     \caption{Value of the bound in Theorem~\ref{thm3} for a representative case where $\beta_2=0.9, T= 1,000$.}
%     \label{fig:bound2}
% \end{minipage}\hfill
% \end{figure*}

\subsection{Convergence of the interpolated optimizer}

%Proposition~\ref{prop:avgrad} shows that interpolating Adam with AVGrad fixes the non-decreasing effective learning rate issue, in case $\rho_t$ behaves as above. 
%After obtaining an intuition on how $\rho$ should behave for effective learning rate to be non-increasing, we would like to see the convergence rate of the interpolated optimizer. Accordingly, 

Due to the complexity of our overall parameterization, deriving theoretical guarantees for \Mada at its full scope is challenging. Therefore, in this section, we reduce the scope by examining convergence properties of interpolations between two optimizers, namely, AVGrad and Adam. In this setting, we will show that the parameterized optimizer allows the design of novel interpolated optimizers which improve the convergence rate of either optimizer up to constant factors, demonstrating the value of the parameterized optimizer formulation. We defer the proof and other technical details to Appendix~\ref{app:thm interp}.

%In this subsection, we present the convergence bound for the interpolation of AVGrad and Adam (we defer the exact details to Appendix~\ref{app:thm interp}). 
We make the following standard assumptions: $F$ is bounded from below, is $L$-smooth, and with stochastic gradients that are bounded almost surely.

\cite{defossez2022a} proposed a unified analysis for the convergence analysis of Adagrad and Adam in the non-convex setting. Since AVGrad can be seen as a version of Adagrad, the proof technique of \cite{defossez2022a} can be adapted to AVGrad with some modifications. The modification includes the introduction of $\rho$ parameter which interpolates between the second-order moment and the moment average, and results in an additional degree of freedom. Subsequently, $\rho$ can be chosen to skew the updates towards the advantageous term under different scenarios.

% \todob{This is the theory justification that I looked for. There are tools. A follow-up question is what is the technical novelty in adapting them. State this clearly in a few sentences.} 
Here we state the convergence result of the interpolated optimizer without momentum, that is when $\beta_1 = 0$ (see precise statement in Appendix~\ref{app:thm interp}). The extension to the case with momentum can be done in the same way we extend Theorem~\ref{thm1} to Theorem~\ref{thm2}, similar to \citep{defossez2022a}. 

\begin{theorem}[Convergence of interpolation of AVGrad and Adam without momentum] \label{thm3} Under the assumptions above and $\alpha_t=\frac{\alpha}{\sqrt{t}}$ for some $\alpha > 0$, and for $\rho_t = \frac{\rho}{t}$ for a constant $\rho$:
% when the second-order normalization term is in the form of interpolation, $\frac{\rho}{t} v_t + (1-\rho) v_t^{(avg)} $ we have:
%$$    G_T \leq \frac{E(T)}{T}\Big(C_1+C_2\Big( \ln\Big(C_3E(T)^2\Big) + T %\Big[\ln\Big(\frac{\rho}{\beta_2}\Big)\Big]_+ \Big)\Big) 
%$$
$$
G_T \leq E(T) \left[ \frac{C_1}{T} + \frac{C_2 \ln \left( E(T)\right)}{T} + C_3 \Big[\ln\Big(\frac{\rho}{\beta_2}\Big)\Big]_+ \right],
$$
where, $G_T = \frac{1}{T}\sum_{t=1}^T \|\nabla F(x_t) \|^2$, $E(T) = \frac{\sqrt{\rho+(1-\rho)T}}{\sqrt{1-\beta_2}}$, and $C_1,C_2,C_3$ are constants independent of $T,\rho,\beta_2 $.      
% \begin{align*}
%     &\frac{1}{T}\sum_{t=1}^T \|\nabla F(x_t) \|^2 \leq \frac{2R E(T) (F_0-F^*)}{\alpha T} \\
%     & +  \frac{2R E(T) d}{T} \left( \ln\left(1 + \frac{R^2 E(T)^2}{\epsilon}\right) + T \left[\ln\left(\frac{\rho}{\beta_2}\right)\right]_+ \right)
% \end{align*}

% where $E(T) = \sqrt{\frac{\rho+(1-\rho)T}{1-\beta_2}}$.

% \begin{align*}
%     &\frac{1}{T}\sum_{t=1}^T \|\nabla F(x_t) \|^2 \leq \mathcal{O}\Big( \frac{\sqrt{\rho + (1-\rho)T}}{T}\Big) \\
%     & +  \mathcal{O}\Big( \frac{\sqrt{\rho + (1-\rho)T}}{T}\Big)\Big( \ln(\rho + (1-\rho)T) + T\left[\ln\left(\frac{\rho}{\beta_2}\right)\right]_+ \Big) \Big)
% \end{align*}    
\end{theorem}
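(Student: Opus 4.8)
The plan is to adapt the unified descent-based analysis of \citet{defossez2022a} to the interpolated second moment $v_t = \rho_t\bar v_t + (1-\rho_t)\tilde v_t$, treating the time-average $\tilde v_t$ as an Adagrad-type accumulator and the exponential average $\bar v_t$ as the Adam-type one. Since $\beta_1=0$ we have $m_t=g_t$, so each step is $x_t = x_{t-1}-\alpha_t g_t/(\sqrt{v_t}+\epsilon)$. Starting from $L$-smoothness, $F(x_t)\le F(x_{t-1}) - \alpha_t\langle\nabla F(x_{t-1}),\, g_t/(\sqrt{v_t}+\epsilon)\rangle + \tfrac{L}{2}\alpha_t^2\|g_t/(\sqrt{v_t}+\epsilon)\|^2$, I would take conditional expectations and argue coordinatewise, summing over $t$ and telescoping the $F(x_0)-F_\star$ gap (the index shift between $\nabla F(x_{t-1})$ and the $\nabla F(x_t)$ in $G_T$ is absorbed into the boundary terms).

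The central difficulty, as always for adaptive methods, is that $v_t$ depends on the current gradient $g_t$, so $\mathbb{E}_{t-1}[g_t/(\sqrt{v_t}+\epsilon)]$ is not simply $\nabla F/(\sqrt{v_t}+\epsilon)$. Following \citet{defossez2022a}, I would introduce a predictable surrogate $\check v_t$ obtained by replacing the current contribution $g_t^2$ (which enters both $\bar v_t$ and the last term of the average $\tilde v_t$) with its conditional expectation $\mathbb{E}_{t-1}[g_t^2]$, and decompose $g_t/(\sqrt{v_t}+\epsilon) = g_t/(\sqrt{\check v_t}+\epsilon) + g_t\,(1/(\sqrt{v_t}+\epsilon)-1/(\sqrt{\check v_t}+\epsilon))$. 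Conditioning, the first part yields the nonnegative main term $(\nabla_i F)^2/(\sqrt{\check v_{t,i}}+\epsilon)$, while the second is an error controlled via $|1/\sqrt a - 1/\sqrt b| = |a-b|/(\sqrt{ab}(\sqrt a+\sqrt b))$ and Young's inequality, producing contributions of the type $\alpha_t(1-\beta_2)g_{t,i}^2/v_{t,i}$ that I fold into the logarithmic sums below. Constructing $\check v_t$ correctly for the interpolated, time-averaged $v_t$ and checking that the resulting error still telescopes is, I expect, the main obstacle.

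The structural observation that makes the AVGrad branch tractable is that $\tilde v_t = S_t/t$ with $S_t := \sum_{s=1}^t\bar v_s$ a nondecreasing, Adagrad-style accumulator, exactly as anticipated in \cref{sec:avgrad}. Using $v_t\ge(1-\rho_t)\tilde v_t = (1-\rho_t)S_t/t$ together with $(1-\beta_2)g_t^2\le\bar v_t = S_t-S_{t-1}$ and $\alpha_t^2=\alpha^2/t$ gives $\sum_t\alpha_t^2(1-\beta_2)g_{t,i}^2/v_{t,i}\lesssim \alpha^2\sum_t (S_t-S_{t-1})/S_t \le \alpha^2\ln(S_T/S_1)$, a logarithm absorbed into the $\ln(E(T))$ factor; the decaying step size is precisely what tempers this sum into a logarithm rather than a linear-in-$T$ term. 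The Adam branch is handled through the complementary bound $v_t\ge\rho_t\bar v_t=(\rho/t)\bar v_t$ and the identity $(1-\beta_2)g_t^2/\bar v_t = 1-\beta_2\bar v_{t-1}/\bar v_t \le (1-\beta_2)+\beta_2\ln(\bar v_t/\bar v_{t-1})$; here a crude bound is too lossy, and the finer balancing of the two accumulators (which regime dominates depends on how the weight $\rho_t=\rho/t$ compares to the EMA memory $1/(1-\beta_2)$) is what yields the crossover term $C_3[\ln(\rho/\beta_2)]_+$, active only when $\rho>\beta_2$.

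Finally I would convert the accumulated main term $\sum_t\alpha_t(\nabla_i F(x_{t-1}))^2/(\sqrt{\check v_{t,i}}+\epsilon)$ into $G_T$. Using the almost-sure gradient bound to control $\check v_{t,i}$ and the interpolation weights, the effective inverse step $\alpha_t^{-1}(\sqrt{\check v_{t,i}}+\epsilon)$ is of order $E(T)=\sqrt{(\rho+(1-\rho)T)/(1-\beta_2)}$, which is $O(1)$ for $\rho=1$ (pure Adam) and $O(\sqrt T)$ for $\rho=0$ (pure AVGrad, consistent with \cref{prop:avgrad} and Theorem~\ref{thm1}). Dividing the telescoped gap and the logarithmic/crossover error terms by this effective denominator, summing over coordinates, and collecting all $T$-, $\rho$-, and $\beta_2$-independent quantities into $C_1,C_2,C_3$ yields the stated bound. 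The extension to $\beta_1>0$ then follows by the same momentum reduction used to pass from Theorem~\ref{thm1} to Theorem~\ref{thm2}, as in \citet{defossez2022a}.
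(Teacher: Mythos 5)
Your outline follows the paper's route almost step for step: the descent lemma, a predictable surrogate for the interpolated second moment obtained by replacing the current $g_t^2$ with $\bbE_{t-1}[g_t^2]$ (the paper's $\tilde\psi_{t,i}$, your $\check v_t$), the $|1/\sqrt{a}-1/\sqrt{b}|=|a-b|/(\sqrt{ab}(\sqrt a+\sqrt b))$ identity plus Young's inequality to show the update approximately follows a descent direction (the paper's Lemma~\ref{thm3:lemma1}), and the factor $E(T)$ arising from $\sqrt{\epsilon+\tilde\psi_{t,i}}\le R\sqrt{(\rho+(1-\rho)T)/(1-\beta_2)}$. All of that is consistent with the paper's proof.

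The gap is in the one genuinely new step: the bound on $\sum_t \bbE\|u_t\|^2$ for the \emph{interpolated} accumulator, i.e.\ the paper's Lemma~\ref{thm3:lemma2}. You propose to split into two branches by lower-bounding $v_t\ge(1-\rho_t)\tilde v_t$ (AVGrad branch) or $v_t\ge\rho_t\bar v_t$ (Adam branch). Neither branch, nor their minimum, produces the stated bound: the first carries a $1/(1-\rho_t)$ factor that degenerates at $\rho=1$, and the second yields a term of order $T/\rho$ rather than $T\left[\ln(\rho/\beta_2)\right]_+$ --- in particular it does not vanish for $\rho\le\beta_2$ and blows up as $\rho\to 0$. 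You concede that ``a crude bound is too lossy'' and correctly guess the form of the crossover term, but the ``finer balancing'' you invoke is exactly the missing argument. The paper handles the sum in one piece: with $b_t=\rho l_t+(1-\rho)r_t$, where $l_t$ is the exponential average and $r_t=\sum_{\tau\le t}l_\tau$, one writes $a_t/(\epsilon+b_t)\le\ln(\epsilon+b_t)-\ln(\epsilon+b_t-a_t)$ and uses the exact identity $b_t-a_t=\beta_2 l_{t-1}+(1-\rho)r_{t-1}$, so that $(\epsilon+b_{t-1})/(\epsilon+b_t-a_t)\le\max\{1,\rho/\beta_2\}$; the sum then telescopes up to a per-step penalty $\left[\ln(\rho/\beta_2)\right]_+$, giving $\ln(1+b_T/\epsilon)+T\left[\ln(\rho/\beta_2)\right]_+$. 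Without this (or an equivalent) argument the crossover term, and hence the theorem's bound, does not follow from your sketch.
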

%\vspace{-10pt}
%\vspace{-10pt}

\begin{figure*}[h]
\begin{minipage}{.45\linewidth}
\centering
    \includegraphics[scale=0.5]{conv_bound.pdf}
    \caption{Value of the bound in Theorem~\ref{thm3} for a representative case where $\beta_2=0.9, T= 10,000$.}
    \label{fig:bound1}
\end{minipage}\hfill
\begin{minipage}{.45\linewidth}
\centering
  \includegraphics[scale=0.5]{conv_bound_2.pdf}
    \caption{Value of the bound in Theorem~\ref{thm3} for a representative case where $\beta_2=0.9, T= 1,000$.}
    \label{fig:bound2}
\end{minipage}\hfill
\end{figure*}

\textbf{Remark.} Aligned with the condition in Proposition~\ref{prop:avgrad}, the contribution from $v_t$ in Theorem~\ref{thm3} is multiplied with $1/t$ in order to avoid divergent terms in the bound. Observe that when $\rho=0$ or $\rho=1$ we recover the convergence rates for AVGrad (see Theorem~\ref{thm1} in Appendix~\ref{app:avgrad}) and Adam \citep{defossez2022a} respectively. %We see that the existence of non-vanishing (rightmost) term depends on whether $\rho>\beta_2$. 
We note that for the right-hand side to remain bounded, one would need either $0 \leq \rho \leq \beta_2$ (all terms vanish as in AVGrad), or $\rho=1$ (a constant term remains as in Adam).
%To avoid exploding terms, one would either need $0 \leq \rho \leq \beta_2$ (no non-vanishing term as in AVGrad) or $\rho = 1$ (there is a constant term as in Adam). 
To gain more insights, in \cref{fig:bound1,fig:bound2}, we plot the value of the bound in Theorem~\ref{thm3} with respect to $\rho$ for two representative cases . The first case with larger $T$ represents a more favorable setting for AVGrad as all the terms vanish with $T$. The second is more favorable for Adam since the vanishing terms vanish faster than AVGrad. In both cases, the interpolated optimizer provides the best convergence bound when $\rho = \beta_2$, which suggests an advantage for the interpolated optimizers.

\section{Experiments}\label{sec:experiments}
In our experiments, we aim to answer how the generalization and downstream performance of \Mada compares against fixed optimizers, and how robust \Mada is to poor hyper-parameter initializations compared to fixed optimizers. We focus on the pre-training of models from scratch. We also try to gain insights into the behavior of \Mada by monitoring the evolution of the optimizer coefficients. Additional details of our experimental setup, and results can be found in Appendix~\ref{app:experiments}.
%We begin this section by providing the particular parameterized optimizer that we used in the experiments.
\begin{figure*}[htbp]
\begin{minipage}{.45\linewidth}
\centering
    \includegraphics[scale=0.5]{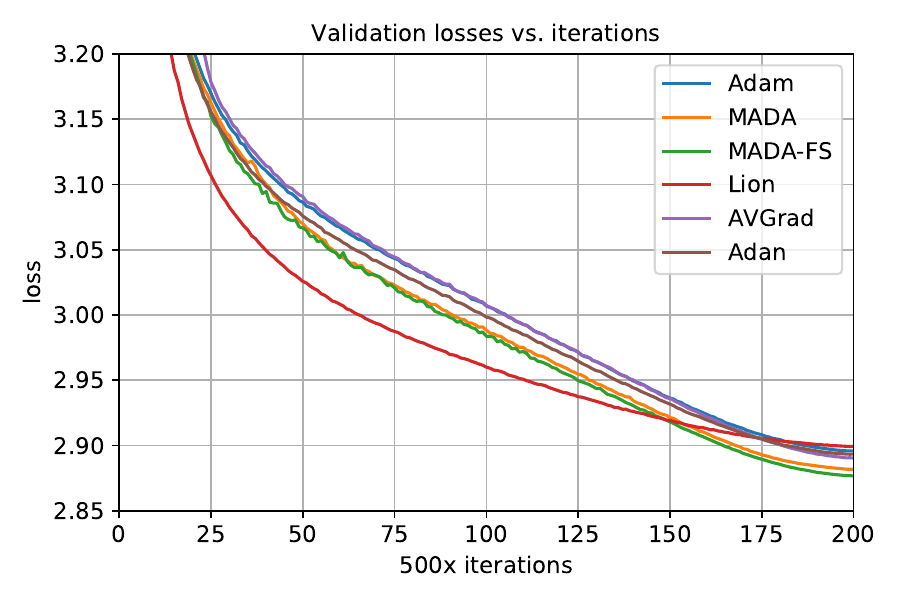}
    \caption{Validation losses of competing methods on OpenWebText for GPT-2 (125M) model using the same random seed.}
    \vspace{-8pt}
    \label{fig:gpt_val}
\end{minipage}\hfill
\begin{minipage}{.45\linewidth}
\centering
    \includegraphics[scale=0.5]{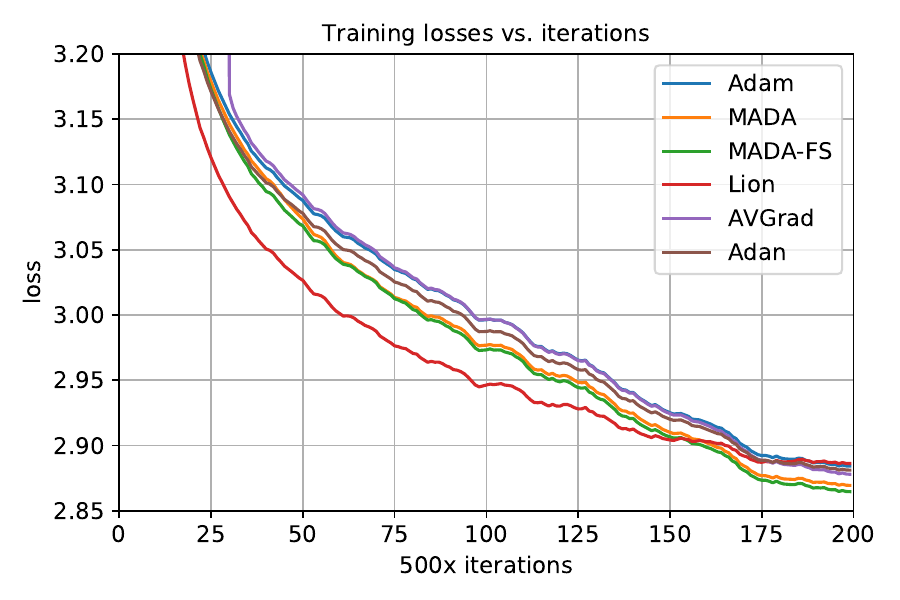}
    \caption{\kaancr{Training losses of competing methods on OpenWebText for GPT-2 (125M) model using same random seed, smoothed for convenience. }}
    \vspace{-8pt}
    \label{fig:gpt_train}
\end{minipage}\hfill
\end{figure*}

\begin{figure*}
\begin{minipage}[t]{.45\linewidth}
\captionof{table}{Validation loss of \Mada on OpenWebText vs other adaptive optimizer baselines.}
    \label{tab:gptcomp}
\centering
\begin{tabular}[t]{lc} \hline
  Method & Validation Loss \\ \hline
% \multicolumn{2}{@{}l}{\textbf{Baselines}}\\
  Adam &  2.8956 \\
  Adan & 2.8896\\
  HyperAdam & 2.8950 \\
  Lion & 2.8892 \\
  AVGrad & 2.8895 \\ 
  \midrule
% \multicolumn{2}{@{}l}{\textbf{\Mada}}\\
  \Mada & \textbf{2.8806} \\
  \Mada-FS & \textbf{2.8766} \\ \hline
  \multicolumn{2}{@{}l}{\textbf{Poor initialization}}\\
  $\Mada^-$ & 2.8921 \\
  $\text{Adam}^-$ & 2.9157 \\ \hline
%   $\Mada^-$ ($\beta_{1,0}=0.7, \beta_{2,0}=0.8$) & 2.8921 \\
%   Adam ($\beta_{1}=0.7, \beta_{2}=0.8$) & 2.9157 \\ \hline
  \end{tabular}
\end{minipage}\hfill
\begin{minipage}[t]{.5\linewidth}
\captionof{table}{Validation perplexities of competing methods on OpenWebText, Wikitext and Lambada datasets.}
    \label{tab:gptppl} 
\centering 
\begin{tabular}{lccc} \cmidrule(lr){1-4} 
  Method & OpenWebText & Wikitext & Lambada \\ \cmidrule{1-4}
  % \multicolumn{2}{@{}l}{\textbf{Baselines}}\\
  Adam &  18.0940 & 63.8544 & 77.3314 \\
  Adan & 17.9863 & 63.5518 & 74.6970\\
  HyperAdam & 18.0843 & 61.7717 & \textbf{72.6803} \\
  Lion & 17.9792 & 61.8661 & 75.3158\\
  AVGrad & 17.9840 & 64.2620 & 75.1317\\
  \cmidrule{1-4}
  % \multicolumn{2}{@{}l}{\textbf{\Mada}}\\
  \Mada  & \textbf{17.8249}& 61.2513 & 74.2480 \\
  \Mada-FS & \textbf{17.7544}& 59.4086 & \textbf{73.5623} \\
  \cmidrule{1-4}
  \multicolumn{2}{@{}l}{\textbf{Poor initialization}}\\
  $\Mada^-$ & 18.0317 & \textbf{57.1613} & 75.3550 \\
  $\text{Adam}^-$ & 18.4624 & 72.9017 & 79.1217 \\ %\hline
  \end{tabular}
\end{minipage}
\end{figure*}

\vspace{-8pt}
\subsection{A concrete parameterized optimizer}
We use a concrete parameterization that is similar to the example presented in Section~\ref{sec:dsp}, with two changes. First, we replace AMSGrad with AVGrad, and second, we include Lion among the optimizers that we interpolate. Hence, second-order moment in \eqref{2ndPara} becomes $v_t = \rho \bar v_t + (1-\rho) \tilde v_t$ as in the interpolation in Section~\ref{sec:avgrad}. Moreover, the update term becomes $\gamma \frac{m_{t}}{\sqrt{v_{t}} + \epsilon} + (1-\gamma)\text{sign}(u_t)$ where $u_t$ is the moment term from Lion \citep{chen2023symbolic}. We refer the reader to Appendix~\ref{app:experiments} for a complete description of the parameterization. Lion was omitted in the example in Section 2 for the sake of simplicity, since it integrates less naturally with the rest of the optimizers.

\subsection{Experimental setting} 
\textbf{Data and models.}
In recent years, auto-regressive models have been widely used for benchmarking and algorithm evaluation \citep{radford2019language, liu2023sophia}. Motivated by this, we evaluate \Mada on the causal language modeling task with GPT-2, over two datasets: Shakespeare \citep{karpathy2015}, and OpenWebText \citep{Gokaslan2019OpenWeb}. %We compare \Mada to Adam, and HyperAdam \citep{chandra2022gradient}, where only hyper-parameters $\beta_1$ and $\beta_2$ of Adam are learned through hyper-gradients. 
On the Shakespeare dataset, we train a 6 layer transformer with context size of 256 (11M parameters) and \kaancr{fine-tune GPT-2 (XL) with 48 layers (1.5B parameters)}. On OpenWebText, we train GPT-2 (small) with 12 layers, 1024 context size (125M parameters) and GPT-2 (medium) with 24 layers, 1024 context size (335M parameters). %\kaancr{Our code is available at \url{https://github.com/amazon-science/mada_optimizer_search}.}
\footnote{We use nanoGPT (\url{https://github.com/karpathy/nanoGPT}) code base for the implementation.}
%\todob{Say why these problems. Are they common large-scale benchmarks?}

\textbf{Baselines and learned optimizers.} On OpenWebText, we compare \Mada to Adam \citep{KingmaB14}, HyperAdam \citep{chandra2022gradient} (Adam with $\beta_1$ and $\beta_2$ parameters learned through hyper-gradients), Adan \citep{xie2023adan}, Lion \citep{chen2023symbolic}, and AVGrad. \kaancr{For all the methods we used decoupled weight decay \citep{loshchilov2018decoupled}, i.e. the AdamW variant of Adam is used.}
%\todob{Say why these methods. Are they state of the art?}
We keep the learning hyper-parameters such as learning rate schedule and weight decay identical, except for Lion where we choose a lower initial learning rate as suggested in \citep{liu2023sophia}. For a fair comparison, for all optimizers, we use the same codebase based on \cite{chandra2022gradient}. For OpenWebText experiments, we use established parameters for Adam ($\beta_1=0.9,\beta_2 = 0.95, \epsilon=10^{-6}$) and also use these values as the initial parameters for \Mada, HyperAdam, and AVGrad; for other methods we use the parameters suggested in respective papers; and measure the validation loss. For Shakespeare experiments, we compare \Mada to Adam and HyperAdam; the relatively small model size in this experiment allows us to sweep the grid of initial $\beta$ parameters and compare the final training loss across many different hyper-parameter choices. In some experiments, we also evaluate the final state of \Mada statically, \emph{i.e.}, we take the final optimizer learned by \Mada and use it as the fixed optimizer from the beginning of training, which we refer to as \Mada-FS.

\begin{figure*}[htbp]
\begin{minipage}{.3\linewidth}
\centering
    \includegraphics[scale=0.33]{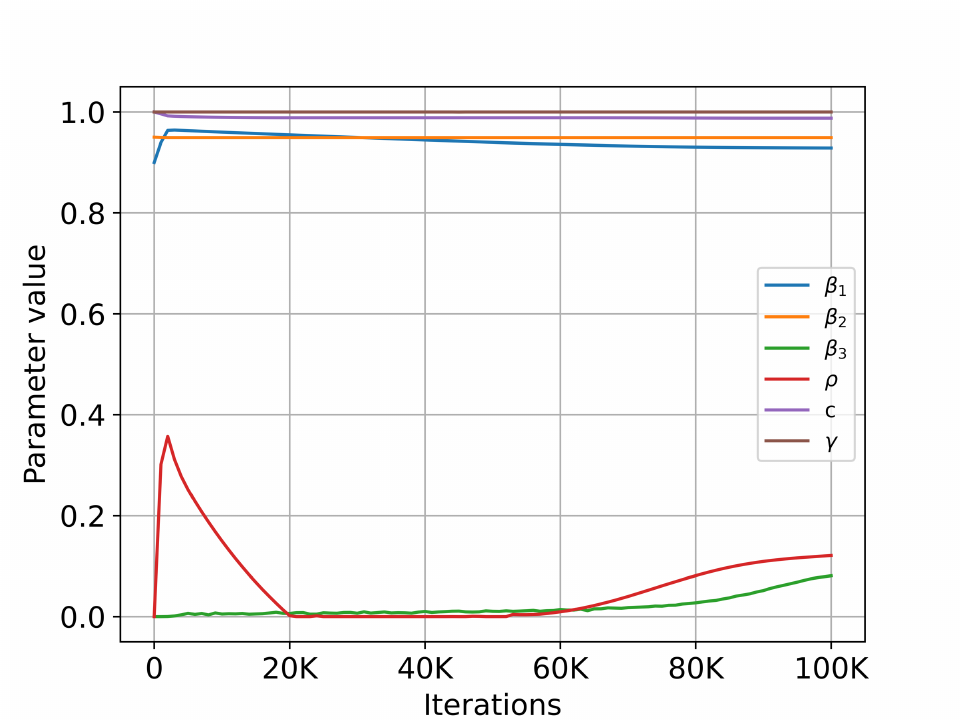}
    \caption{Parameter evolution for $\beta_{1,0}=0.9,\beta_{2,0}=0.95,\beta_{3,0}=0$.}
    \label{fig:gpt_parameters}
\end{minipage}\hfill
\begin{minipage}{.3\linewidth}
\centering
  \includegraphics[scale=0.33]{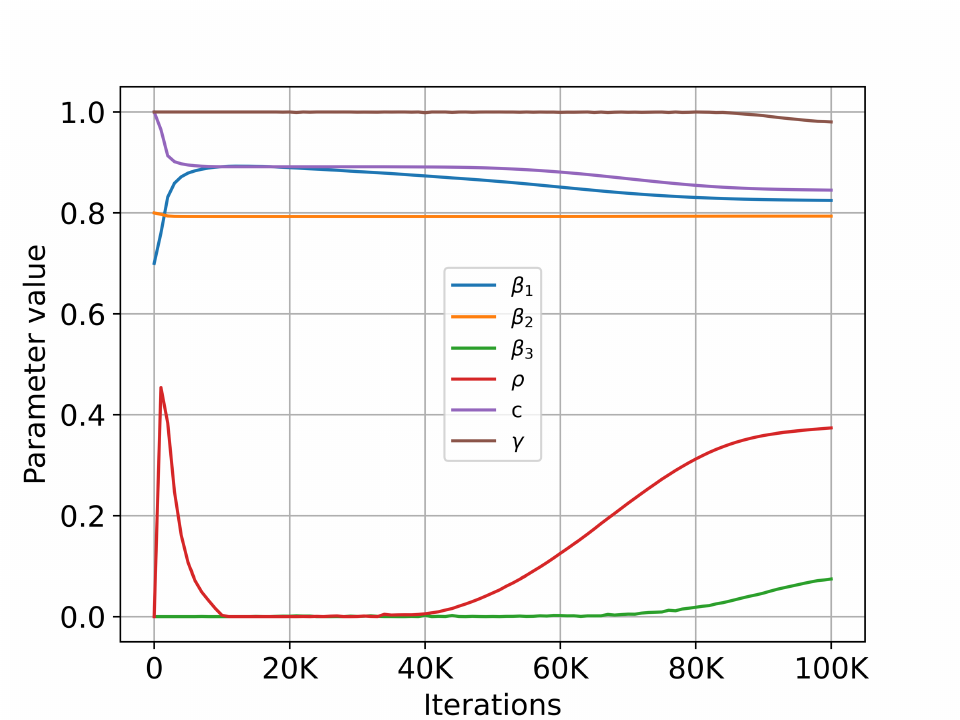}
    \caption{Parameter evolution for $\beta_{1,0}=0.7,\beta_{2,0}=0.8,\beta_{3,0}=0$.}
    \label{fig:gpt_parameters2}
\end{minipage}\hfill
\begin{minipage}{.3\linewidth}
\centering
  \includegraphics[scale=0.33]{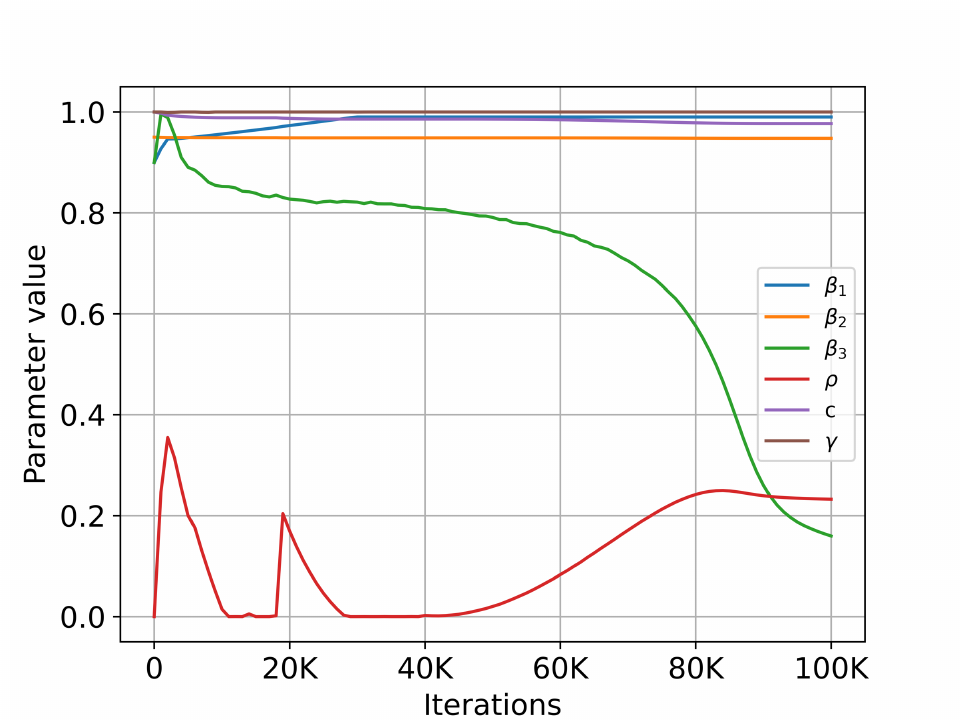}
    \caption{Parameter evolution for $\beta_{1,0}=0.9,\beta_{2,0}=0.95, \beta_{3,0}=0.9$.}
    \label{fig:gpt_parameters3}
\end{minipage}\hfill
\end{figure*}

\kaancr{\textbf{Tuning hyper-learning rates.} While hyper-learning rates are additional hyper-parameters to be tuned; the tasks are less sensitive to hyper-learning rates compared to other hyper-parameters such as learning rate, most likely because each hyper-learning rate controls the update of a single parameter. As a result, hyper-learning rates can be fine-tuned easily and can be transferred across similar tasks. In general, for the hyper-learning rates of $\beta_1,\beta_2$, since the gradients are relatively large, we search in a set of smaller values: $\{10^{-3},5\times10^{-4}, 10^{-4}\}$. For the other parameters we search in the set $\{10^{-1},5\times10^{-2}, 10^{-2}\}$.}
% We envision to be two use cases for \Mada, the first one is when the user already knows a decent performing optimizer for their task, and the second one is when the  
\subsection{Results}
In this section, we first present empirical results of training GPT-2 models on OpenWebText, and provide a discussion of generalization performance and robustness of \Mada. We also show how the interpolation coefficients evolve during training. Next, we provide a visualization of the optimal training loss given various initializations of $\beta_1,\beta_2$ values using the smaller Shakespeare dataset. We present additional results in Appendix~\ref{app:experiments} including a toy problem to show that \Mada discovers optimal solutions that Adam cannot find, and comparing \Mada and Adam for GPT-2 (medium).

\textbf{GPT-2 on OpenWebText.} %We compare the validation performance of \Mada to this well-established baseline and other recently proposed optimizers. 
When we initialize \Mada from AVGrad and Adan with $\beta_{1,0}=0.9,\beta_{2,0} = 0.95,\beta_{3,0}=0.9, \rho_{0}=0$ (where subscript $0$ denotes the time index) from Table~\ref{tab:gptcomp} we observe that \Mada consistently outperforms Adam and other recently proposed optimizers, including Lion, Adan, and HyperAdam. In particular, \Mada-FS outperforms Adam with established parameters by 0.019 in validation loss which is a significant improvement for this task. \kaancr{In \cref{fig:gpt_val} and \cref{fig:gpt_train} we see that \Mada is able to converge to a lower loss than baseline methods (both in terms of validation and training loss). Note that MADA is able to converge faster than AVGrad and Adam with the same learning rate and schedule, which may be attributed to theoretical result \cref{thm3} on interpolated optimizers. } %\todob{I do not follow the last sentence because it is unclear how this result relates to our theory.}

\textbf{Perplexity on benchmark datasets.} To measure the generalization of \Mada to other datasets, we compute the validation perplexity of the trained models on three datasets as shown in Table~\ref{tab:gptppl}. We see that on OpenWebText \Mada outperforms Adam and HyperAdam by 0.34 and 0.33 points); while on Wikitext \citep{wikitext}, \Mada outperforms these two baselines by 4.45 and 2.37 respectively. On Lambada \citep{lambada} it is the second-best-performing method with a small gap behind HyperAdam (0.88). %\todob{Cite Wikitext and Lambada.}

\textbf{\Mada-FS versus \Mada.} %We also try taking the final optimizer learned by \Mada and use it as the fixed optimizer from the beginning of training, which we refer to as \Mada-FS, and 
We find that \Mada-FS performs slightly better than \Mada for OpenWebText (0.004 in validation loss and 0.07, 1.85, 0.68 points on validation perplexity on OpenWebText, Wikitext and Lambada). More details are in \cref{tab:gptcomp,tab:gptppl}. On the other hand, on Shakespeare we observe that, the dynamically-evolving optimizer performs generally better than using the fixed final optimizer. We conjecture that the difference in the number of iterations, the model size or dataset may explain this phenomenon. %\todob{I do not see any evidence of this. In my opinion, the simplest explanation is a dataset-specific phenomenon.}

\textbf{Learning interpolation coefficients is crucial.} We also see that HyperAdam performance is very close to Adam (2.8950 vs. 2.8956), which suggests that the main performance improvement of \Mada does not simply originate from the tuning of the $\beta_1, \beta_2$ parameters, but rather from the adaptation of the interpolation coefficients in the optimizer space.
\begin{figure*}[htbp]
    \centering
    \includegraphics[scale=0.5]{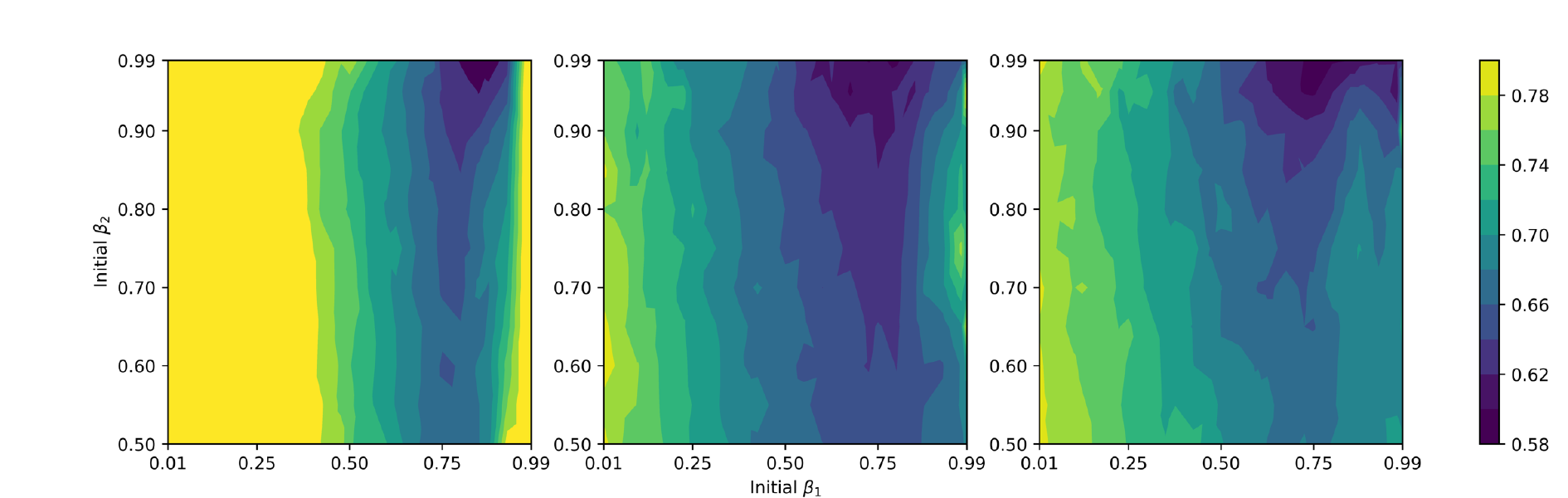}
    \vspace{-8pt}
    \caption{Final training loss of Adam, \Mada, and HyperAdam vs. initial $\beta$ values on Shakespeare dataset. \Mada yields better performance for wider choice of initial $\beta$ values,
illustrating its robustness.}
    \label{fig:shakespeare}
\end{figure*}

\textbf{Robustness to initial hyper-parameters.} Additionally, we compare \Mada to Adam when we initialize from the suboptimal hyperparameters $\beta_{1,0}=0.7, \beta_{2,0}=0.8$ (we call these optimizers $\Mada^-$ and $\text{Adam}^-$ respectively)%, this represents the case when the user has no domain knowledge about optimizer parameters
. Notably, we observe that even suboptimally initialized $\Mada^-$ is able to outperform Adam with the established initial parameters, as well as HyperAdam. This particular example indicates the robustness of \Mada. We provide more examples with poor initial hyper-parameter choices on Shakespeare dataset in the next subsections.
\vspace{-4pt}

\textbf{Evolving hyper-parameters.} We monitor the evolution of the interpolation coefficients during training of GPT-2 (125M) model on OpenWebText for various choices of initialization of $\beta_1, \beta_2$ (Figures ~\ref{fig:gpt_parameters}, \ref{fig:gpt_parameters2}, and \ref{fig:gpt_parameters3}). 

First, we observe that sub-optimal initialization of hyper-parameters (as in Figure~\ref{fig:gpt_parameters2}) results in a more significant update of the coefficients. 
%\todob{The title or caption of a figure should say that the plot corresponds to worse initial conditions.} 
This suggests that default optimizer state with $\beta_{1,0}=0.9,\beta_{2,0}=0.95$ is close to a local minimum. A closer inspection yields insight into how \Mada accelerates training. First, in Figures~\ref{fig:gpt_parameters},~\ref{fig:gpt_parameters2} we see that $\beta_1$ increases at the beginning which facilitates taking larger steps, while towards the end it decreases to give more emphasis on individual gradients. Second, we observe that the interpolation factor of AVGrad, $\rho$, follows a three-phase pattern. This suggests that \Mada puts more importance on individual normalization term in the beginning and towards the end, and it puts more emphasis on the less noisy averaging term during the intermediate stages.

Another interesting behavior we observe is when we initialize $\beta_3$ (which governs the weight of Adan) from a higher value such as $0.9$ (Figure~\ref{fig:gpt_parameters3}), we see $\beta_1$ reaches 0.99 and stays constant, in contrast with Figures~\ref{fig:gpt_parameters},~\ref{fig:gpt_parameters2}. Remarkably, \Mada initialized with $\beta_3=0.9$ automatically trains $\beta_1$ to be 0.99; i.e. \Mada, when initialized from combination of Adan and AVGrad, automatically finds the $\beta_1$ that is suggested by the authors of \citep{xie2023adan}. Note that $\beta_1, \beta_2, \beta_3$ in this work correspond to $1-\beta_1, 1-\beta_3, 1-\beta_2$ in \cite{xie2023adan}.
 
\textbf{Shakespeare dataset.} On Shakespeare dataset we compare \Mada to Adam and HyperAdam. We show that it results in significant increase in the performance across many initial $\beta_1,\beta_2$ values when training 11M parameter model. Particularly, in Figure~\ref{fig:shakespeare}, we see that \Mada results in better performance for the vast majority of initial $\beta_1$ and $\beta_2$ parameters, illustrating robustness. 

\textbf{Fine-tuning GPT-2 XL on Shakespeare dataset.} \kaancr{We also fine-tune the pre-trained GPT-2 XL (1.5B parameters) model on Shakespeare, for approximately 2 epochs, using \Mada, Adam, and HyperAdam methods. Table~\ref{tab:finetune} shows the resulting training loss values, where \Mada results in a large improvement. Notably, HyperAdam achieves no gain over Adam.} %\todob{This is the largest empirical gain that we report. Yet we talk about it in mere one sentence.}

\begin{figure*}[h]
    \begin{minipage}[t]{0.45\linewidth}
    \centering
    \captionof{table}
      {
        Training losses after 2 epochs of fine-tuning on Shakespeare dataset.%
        \label{tab:finetune}
      }
    \begin{tabular}{lcc} \hline 
    Method & Training loss  \\ \hline
    \Mada &  \textbf{0.255}  \\
    Adam & 0.276   \\
    HyperAdam & 0.278
  \end{tabular}
\end{minipage}\quad
\begin{minipage}[t]{0.45\linewidth}
\centering
\captionof{table}
      {
        Test accuracy of competing methods for CNN and ResNet models.%
        \label{tab:vision}
      }
    \begin{tabular}{lcc} \hline 
    Method & 5-layer CNN & ResNet-9 \\ \hline
    \Mada &  \textbf{66.12} $\pm$ \textbf{0.14} & \textbf{93.79} $\pm$ \textbf{0.11} \\
    Adam & $65.84 \pm 0.11$ &  $93.73 \pm 0.10$  \\
    HyperAdam & $65.80 \pm 0.21$ &   $93.69 \pm 0.06$ \\
    Momentum SGD & $65.97 \pm 0.94$ & $92.60 \pm 0.10$
  \end{tabular}
\end{minipage}
\end{figure*}

%, in that case we could obtain even larger gains.    <--- can: we should not speculate

\textbf{Vision tasks}. We also validated the performance of \Mada on two computer vision models: 5-layer CNN and ResNet-9 on CIFAR-10 dataset. We observe that \Mada shows consistent improvement over our baselines (see test accuracy results below in Table~\ref{tab:vision}). %We will include additional details (loss curves, chosen hyper-parameters) in the updated version.

\section{Conclusion}\label{sec:discussions}
In this paper we propose an approach to unify a set of optimizers into a parameterized formulation. We further show we can use \Mada to dynamically learn the most suitable optimizer for a particular task during training using hyper-gradient descent. We employ \Mada to train language models and observe that it consistently outperforms Adam and other fixed optimizers both in terms of best validation performance, and in terms of robustness to initial hyper-parameters.

\textbf{Limitations.} The main limitation of our framework is the additional computational requirement and memory usage of the parameterized optimizer due to the maintained optimizer states. The additional computation can be broken down into two components (per-parameter computational steps and computation of hyper-gradients); both can be shown to be negligible with respect to the computational requirements of overall training (see Appendix~\ref{app:computation}, where we also provide the resulting memory use and iteration speed numbers). The additional memory usage arises from the fact that we need to maintain a larger optimizer state per parameter. To mitigate, one can employ sharded data parallelism techniques \citep{rajbhandari2020zero}, which shards the optimizer state across a large number of data-parallel devices. Another approach is to analyze the contributions of the constituent optimizers to the learning performance, and prune the ones that have little contribution.

\textbf{Future work.} Theoretically, we have shown the convergence of interpolation of AVGrad and Adam; as a future work we would like to construct a generic theoretical framework that can characterize the convergence of optimizer interpolations more generally. Empirically, we aim to gain a deeper understanding of the dynamics of the optimizer learning process in practice.

\bibliographystyle{plainnat}
\bibliography{bibliography}

%%%%%%%%%%%%%%%%%%%%%%%%%%%%%%%%%%%%%%%%%%%%%%%%%%%%%%%%%%%%%%%%

\newpage
\appendix
\begin{center}
{\LARGE \bf Appendix}
\end{center}

\section{Setup, Details, and Proof for Theorem~\ref{thm3} } \label{app:thm interp}
\textbf{Assumptions.} (i) $F$ is bounded below by $F^*$, (ii)  stochastic gradients, that is $\forall x \in \bbR^d, \ \|\nabla f(x)\|_{\infty} \leq R \text{ a.s. }$, (iii) the objective function is smooth, that is $\forall x,y \in \bbR^d, \quad \|\nabla F(x) - \nabla F(y) \|_2 \leq L\| x-y\|_2$.

\textbf{Setup.} Our setup follows \cite{defossez2022a}. Let $d \in \mathbb{N}$ be the dimensionality of the model. Given a function $h:\mathbb{R}^d \rightarrow \mathbb{R}$ we define $\nabla_i h$ as the $i$'th component of the gradient. For the stochastic loss, we assume we have access to an oracle providing i.i.d samples $(f_t)_{t\in \mathbb{N}}$. $\bbE_{t-1}$ is defined as the expectation conditioned upon observing the past stochastic function values: $f_1, \ldots, f_{t-1}$. Unlike \cite{defossez2022a}, for simplicity of calculations, we define $\epsilon_t = \epsilon/t$ where $\epsilon$ is a small constant for stability. Next we define adaptive moment updates in a generic way, and then specialize them to AVGrad. These updates are different from those in Section~\ref{sec:dsp}, \ref{sec:avgrad}, but they are equivalent, and facilitates easier analysis.

\textbf{Adaptive updates.} Similar to \cite{defossez2022a} we define the following vectors iteratively. 
\begin{align*}
    & m_{t,i} = \beta_1 m_{t-1,i} + \nabla_i f_t(x_{t-1})\\
    & v_{t,i} = \beta_2 v_{t-1,i} + \nabla_i f_t(x_{t-1})^2
\end{align*}
Note that $\hat m_{t,i} = (1-\beta_1)m_{t,i}$ and $\hat v_{t,i} = (1-\beta_2)v_{t,i}$ give the updates in Adam. \cite{defossez2022a} uses the learning rate to absorb $(1-\beta)$ terms. In particular, if one has an update $x_{t,i} = x_{t-1,i} - \alpha_t\frac{m_{t,i}}{\sqrt{\epsilon+v_{t,i}}}$; Adam is recovered with $\alpha_t = \alpha \frac{1-\beta_1}{\sqrt{1-\beta_2}}$.

\textbf{AVGrad updates.} Let $\alpha_t = \frac{\alpha}{\sqrt{t}}$ be the learning rate. AVGrad is defined via the following iterative steps
\begin{align*}
    v^{(sum)}_{t,i} = v^{(sum)}_{t-1,i} + v_{t,i} =\sum_{j=1}^t v_{j,i} \text{, and } v^{(avg)}_{t,i} = \frac{v^{(sum)}_{t,i}}{t} \\
    x_{t,i} = x_{t-1,i} - \alpha_t\frac{m_{t,i}}{\sqrt{\epsilon_t+v^{(avg)}_{t,i}}} = x_{t-1,i} - \alpha \frac{m_{t,i}}{\sqrt{\epsilon+v^{(sum)}_{t,i}}}.
\end{align*}
Note that to obtain the AVGrad updates we also need to scale the learning rate with $\frac{1-\beta_1}{\sqrt{1-\beta_2}}$; but, for now, we assume it is already absorbed in the $\alpha$ for the sake of simplicity. We also drop the bias correction for simplicity as justified in  \cite{defossez2022a}.

\textbf{Precise Statement of Theorem~\ref{thm3}.}
Under the assumptions above and $\alpha_t=\frac{\alpha}{\sqrt{t}}$ for some $\alpha > 0$, and a constant $0\leq\rho\leq1$; when the second-order normalization term is in the form of the interpolation $\frac{\rho}{t}v_t+(1-\rho)v_t^{(avg)}$ we have:
\begin{align*}
    \frac{1}{T}\sum_{t=1}^T \|\nabla F(x_t) \|^2 & \leq \frac{2R\sqrt{\rho + (1-\rho)T}(F_0-F^*)}{\alpha T \sqrt{1-\beta_2}} \\
    & \quad + \frac{2R\sqrt{\rho + (1-\rho)T}d}{\sqrt{1-\beta_2}T}
    \left( 2R+\alpha L \right) \left( \ln\left(1 + \frac{R^2 (\rho + (1-\rho) T)}{\epsilon(1-\beta_2)}\right) + T \left[\ln\left(\frac{\rho}{\beta_2}\right)\right]_+ \right)
\end{align*}

\subsection{Proof of Theorem~\ref{thm3}}
We multiply the normalization term of the update with $t$ from the decaying learning rate; hence, there is a multiplying factor of $t$ difference in definition of interpolated term in Theorem~\ref{thm3} and $\bar \psi$ in this proof. Let us start by defining $\bar \psi_{t,i}$
\begin{align}
     \bar \psi_{t,i} = \rho v_{t,i} + (1-\rho)v^{(sum)}_{t,i}
\end{align}
We also define $\tilde \psi_{t,i}$:
\begin{align}
     \tilde \psi_{t,i} = \rho \tilde v_{t,i} + (1-\rho)\tilde v^{(sum)}_{t,i} & = \rho (\beta_2 v_{t-1,i} + \bbE_{t-1} \nabla_i f_t(x_{t-1})^2) + (1-\rho) (v^{(sum)}_{t-1,i}+\beta_2 v_{t-1,i} + \bbE_{t-1}\nabla_i f_t(x_{t-1})^2) \notag \\
     & = (1-\rho)v^{(sum)}_{t-1,i} + \beta_2 v_{t-1,i} + \bbE_{t-1}\nabla_i f_t(x_{t-1})^2, \label{tildepsi}
\end{align}
where the contribution of the last gradient is replaced by its expected value conditioned on the past iteration. Let us also define
\begin{equation*}
    u_{t,i} = \frac{\nabla_i f_t(x_{t-1})}{\sqrt{\epsilon + \overline \psi_{t,i}}}.
\end{equation*}
From the smoothness of the objective function $F$, we can use the Descent Lemma to obtain that
\begin{equation*}
    F(x_{t}) \leq F(x_{t-1}) - \alpha \nabla F(x_{t-1})^\top u_t+ \frac{\alpha^2 L}{2}\| u_t \|^2.
\end{equation*}

Taking the expectation of both sides conditioned on $f_1, \ldots, f_{t-1}$, we have
\begin{align} 
    \bbE_{t-1} F(x_{t}) \leq F(x_{t-1}) - \alpha \sum_{i \in [d]} \bbE_{t-1}\left[\nabla_i F(x_{t-1})\frac{\nabla_i f_t(x_{t-1})}{\sqrt{\epsilon + \bar \psi_{t,i} }} \right] + \frac{\alpha^2 L}{2} + \bbE_{t-1} [\|u_t\|^2]. \label{eq:cond exp}
\end{align}

To upper bound the second term on the right hand side, we use the following lemma which generalizes Lemma 5.1 in \citep{defossez2022a} to be used with interpolated optimizer and replaces $v_t$ with $\psi_{t,i}$ as the normalization term.

\begin{lemma}[Updates approximately follow a descent direction]\label{thm3:lemma1}
    \begin{align}
        \bbE_{t-1} \left[ \nabla_i F(x_{t-1}) \frac{\nabla_i f_t(x_{t-1})}{\sqrt{\epsilon+\bar \psi_{t,i}}} \right] \geq \frac{\nabla_i F(x_{t-1})^2}{2\sqrt{\epsilon + \tilde \psi_{t,i}}}- 2R\bbE_{t-1} \left[ \frac{\nabla_i f_t(x_{t-1})^2}{\epsilon + \psi_{t,i}} \right].
    \end{align}
\end{lemma}
\begin{proof}
    For the sake of simplicity, we define $G = \nabla_i F(x_{t-1}), \ g = \nabla_i f_t(x_{t-1}), \psi = \bar \psi_{t,i}, \tilde \psi = \tilde \psi_{t,i}$. First obviously, we have
    \begin{align} \label{lemma5: total}
        \frac{Gg}{\sqrt{\epsilon+\psi}} = \frac{Gg}{\sqrt{\epsilon+\tilde \psi}} + \underbrace{\frac{Gg}{\sqrt{\epsilon+\psi}} - \frac{Gg}{\sqrt{\epsilon+\tilde \psi}}}_A.
    \end{align}
    For the first term, we use the fact that $\bbE_{t-1}[g]=G$ to obtain that
    \begin{align}\label{lemma5:first term}
        \bbE_{t-1} \left[\frac{Gg}{\sqrt{\epsilon+\tilde \psi}} \right]  = \frac{G^2}{\sqrt{\epsilon+\tilde \psi}}.
    \end{align}
    In order to bound $A$, we begin with
    \begin{align*}
     A = Gg \frac{\tilde \psi - \psi}{\sqrt{\epsilon+\psi}\sqrt{\epsilon+\tilde \psi}(\sqrt{\epsilon+\psi}+\sqrt{\epsilon+\tilde \psi})} = Gg \frac{\bbE_{t-1}g^2 - g^2}{\sqrt{\epsilon+\psi}\sqrt{\epsilon+\tilde \psi}(\sqrt{\epsilon+\psi}+\sqrt{\epsilon+\tilde \psi})},
     \end{align*}
     where the last equality follows from the fact that $\tilde \psi - \psi = \bbE_{t-1}[g^2] - g^{2}$ (see \eqref{tildepsi} and the definition of $\psi_{t,i}$). Hence, from the triangle inequality we have that
     \begin{align*}
     |A| & \leq \underbrace{|Gg| \frac{\bbE_{t-1}g^2}{\sqrt{\epsilon+\psi}(\epsilon+\tilde \psi)}}_{A_1} + \underbrace{|Gg| \frac{g^2}{\sqrt{\epsilon+\tilde \psi}(\epsilon+ \psi)}}_{A_2},
    \end{align*}
    where in the inequality follows from the fact that $\sqrt{\epsilon+\psi}+\sqrt{\epsilon+\tilde \psi} \geq \max(\sqrt{\epsilon+ \psi}, \sqrt{\epsilon+\tilde \psi})$. A useful fact that will be used later is the following
    \begin{align}\label{fact1}
        \forall \lambda>0, \ x,y \in \bbR \quad xy \leq \frac{\lambda x^2}{2} + \frac{y^2}{2\lambda}.
    \end{align}
    To bound $A_1$, we use \eqref{fact1} with $\lambda = \frac{\sqrt{\epsilon+\tilde \psi}}{2}, x=\frac{|G|}{\sqrt{\epsilon + \tilde \psi}}, y = |g| \frac{ \bbE_{t-1} g^2}{2\sqrt{\epsilon+\tilde \psi} \sqrt{\epsilon+\psi}}$, which yields that
    \begin{align*}
    A_1 \leq \frac{G^2}{4\sqrt{\epsilon + \tilde \psi}} + \frac{g^2\bbE_{t-1} [g^2]^2}{(\epsilon+\tilde \psi)^{3/2}(\epsilon + \psi)}.
    \end{align*}
    Taking the expectation and noting $\epsilon + \tilde \psi \geq \bbE_{t-1} [g^2]$ ensures that
    \begin{align*}
        \bbE_{t-1} [A_1] \leq \frac{G^2}{4\sqrt{\epsilon+\tilde \psi}} + \frac{\bbE_{t-1} [g^2]}{\sqrt{\epsilon+\tilde \psi}} \bbE_{t-1}\left[\frac{g^2}{\epsilon+\psi} \right] \leq \frac{G^2}{4\sqrt{\epsilon+\tilde \psi}} + R \bbE_{t-1}\left[\frac{g^2}{\epsilon+\psi} \right]
    \end{align*}
    where the last inequality uses our boundedness assumption $\sqrt{\bbE_{t-1} [g^2]}\leq R $ and the fact that $\sqrt{\epsilon + \tilde \psi} \geq \sqrt{\bbE_{t-1} [g^2]}$. Similarly, for $A_{2}$, we use \eqref{fact1} with $\lambda = \frac{\sqrt{\epsilon+\tilde \psi}}{2 \bbE_{t-1}g^2}, x= \frac{|Gg|}{\sqrt{\epsilon+\tilde \psi}}, y= \frac{g^2}{\epsilon  + \psi}$, which gives us (we used similar bounding arguments)
    \begin{align*}
    \bbE_{t-1} [A_2] \leq \frac{G^2}{4 \sqrt{\epsilon + \tilde \psi}}+ R \frac{\bbE_{t-1} [g^2]}{\epsilon+\psi}.
    \end{align*}
    Combining both upper bounds we have,
    \begin{align*}
        \bbE_{t-1} [|A|] \leq  \frac{G^2}{2\sqrt{\epsilon + \tilde \psi}}+ 2R\frac{\bbE_{t-1} [g^2]}{\epsilon+\psi} ,
    \end{align*}
    which can be equivalently written as follows
    \begin{align*}
        \bbE_{t-1} [-|A|] \geq - \left(\frac{G^2}{2\sqrt{\epsilon + \tilde \psi}} + 2R\frac{\bbE_{t-1} [g^2]}{\epsilon+\psi} \right),
    \end{align*}
    Finally, combining further with \eqref{lemma5:first term} and substituting into \eqref{lemma5: total} gives us,
    \begin{align*}
        \bbE_{t-1} \left[\frac{Gg}{\sqrt{\epsilon+\psi}} \right] & \geq \frac{G^2}{\sqrt{\epsilon + \tilde \psi}} - \left(\frac{G^2}{2\sqrt{\epsilon + \tilde \psi}} + 2R\frac{\bbE_{t-1} [g^2]}{\epsilon+\psi}\right) = \frac{G^2}{2\sqrt{\epsilon + \tilde \psi}} - 2R\frac{\bbE_{t-1} [g^2]}{\epsilon+\psi},
    \end{align*}
    which proves the desired result. 
\end{proof}
Using Lemma~\ref{thm3:lemma1} in \eqref{eq:cond exp} yields that
\begin{align*}
    \bbE_{t-1} [F(x_{t})] \leq F(x_{t-1}) - \left(\frac{\alpha\sqrt{1-\beta_2}}{2R\sqrt{\rho + (1-\rho)t}}\|\nabla F(x_{t-1})\|^2 - 2\alpha R \bbE_{t-1} [\| u_t \|^2]\right)+ \frac{\alpha^2 L}{2}\bbE_{t-1} [\| u_t \|^2].
\end{align*}
Summing both sides for $t=1, \ldots ,T$, and noting $\sqrt{t} \leq \sqrt{T}$ implies that
\begin{align*}
    \bbE [F(x_{T})] \leq F(x_{0}) - \frac{\alpha \sqrt{1-\beta_2}}{2R\sqrt{\rho + (1-\rho)T}} \sum_{t=0}^{T-1} \|\nabla F(x_{t})\|^2+ \left(2\alpha R+\frac{\alpha^2 L}{2}  \right)\sum_{t=0}^{T-1} \bbE[\| u_t \|^2].
\end{align*}
Equivalently, we can write,
\begin{align}
    \frac{\alpha \sqrt{1-\beta_2}}{2R\sqrt{\rho + (1-\rho)T}}\sum_{t=0}^{T-1} \|\nabla F(x_{t})\|^2 \leq F(x_{0})-\bbE [F(x_{T})] + \left(2\alpha R+\frac{\alpha^2 L}{2}  \right)\sum_{t=0}^{T-1} \bbE[\| u_t \|^2]. \label{eq:after lemma 1}
\end{align}
Now, we would like to bound the last term on the right hand side. For this, we introduce the following lemma that generalizes Lemma 5.2 in \citep{defossez2022a}.

\begin{lemma}\label{thm3:lemma2}
    Define $b_t=\rho \sum_{j=1}^t \beta_2^{t-j}a_j + (1-\rho)\sum_{\tau=1}^t \sum_{j=1}^\tau \beta_2^{\tau-j} a_j$ for $t>0$ and $b_0 = 0$, we have
    \begin{align*}
        \sum_{t=1}^T\frac{a_t}{\epsilon+b_t} \leq  \ln\left(1 + \frac{R^2 (\rho + (1-\rho) T)}{\epsilon(1-\beta_2)}\right) + T \left[\ln\left(\frac{\rho}{\beta_2}\right)\right]_+.
    \end{align*}
\end{lemma}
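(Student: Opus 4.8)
The plan is to follow the proof of Lemma 5.2 in \citep{defossez2022a}, which controls a sum $\sum_t a_t/(\epsilon+b_t)$ through the elementary inequality $\ln x \ge 1-1/x$ together with a telescoping argument, but with the decay factor $\beta_2$ replaced by an \emph{effective} decay factor that absorbs the interpolation. Writing $v_t = \sum_{j=1}^t \beta_2^{t-j}a_j$ and $v^{(sum)}_t = \sum_{\tau=1}^t v_\tau$ so that $b_t = \rho\, v_t + (1-\rho)\,v^{(sum)}_t$, the first step is to extract a one-step recursion. Using $v_t = \beta_2 v_{t-1} + a_t$ and $v^{(sum)}_t = v^{(sum)}_{t-1} + v_t$, a short computation gives $b_t - a_t = \beta_2 v_{t-1} + (1-\rho)v^{(sum)}_{t-1}$.

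The crucial step is then to compare $b_t - a_t$ against $b_{t-1} = \rho\, v_{t-1} + (1-\rho)\,v^{(sum)}_{t-1}$. Since $v_{t-1},v^{(sum)}_{t-1}\ge 0$, matching coefficients term by term shows $b_t - a_t \ge \gamma\, b_{t-1}$ with the effective decay $\gamma = \min\{1,\ \beta_2/\rho\}$: the coefficient $(1-\rho)$ on the cumulative term already matches on both sides (forcing only $\gamma\le 1$), while the mismatch $\beta_2$ versus $\rho$ on the exponential-average term forces $\gamma\le\beta_2/\rho$; note $\gamma>0$ in all regimes, including $\rho=0$ where $\gamma=1$. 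This is the step I expect to be the main obstacle, since it is exactly where the non-geometric accumulation in $v^{(sum)}_t$ must be reconciled with the geometric decay in $v_t$. The matching coefficient on the cumulative term is precisely what keeps the sum from diverging, and the $\beta_2$-versus-$\rho$ discrepancy on the averaged term is what later produces the additive $[\ln(\rho/\beta_2)]_+$ penalty.

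With $\gamma$ in hand the remainder is routine. From $a_t \le b_t - \gamma b_{t-1}$ and $\ln x \ge 1 - 1/x$ applied at $x = (\epsilon+b_t)/(\epsilon+\gamma b_{t-1})$, I obtain $a_t/(\epsilon+b_t)\le \ln\bigl((\epsilon+b_t)/(\epsilon+\gamma b_{t-1})\bigr)$. I then split this logarithm as $\ln\bigl((\epsilon+b_t)/(\epsilon+b_{t-1})\bigr) + \ln\bigl((\epsilon+b_{t-1})/(\epsilon+\gamma b_{t-1})\bigr)$. Summing over $t$, the first piece telescopes (using $b_0=0$) to $\ln(1+b_T/\epsilon)$, and each term of the second piece is bounded by $\ln(1/\gamma) = [\ln(\rho/\beta_2)]_+$, because $(\epsilon+b_{t-1})/(\epsilon+\gamma b_{t-1})\le 1/\gamma$ whenever $\gamma\le 1$; summing over the $T$ indices yields the term $T[\ln(\rho/\beta_2)]_+$.

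Finally, I bound $b_T$ using the almost-sure bound $a_j\le R^2$ (recall $a_j$ stands for $\nabla_i f_j(x_{j-1})^2$ and $\|\nabla f\|_\infty\le R$): summing the geometric series gives $v_T\le R^2/(1-\beta_2)$, hence $v^{(sum)}_T\le T R^2/(1-\beta_2)$, so $b_T \le \frac{R^2}{1-\beta_2}\bigl(\rho+(1-\rho)T\bigr)$. Substituting this into $\ln(1+b_T/\epsilon)$ produces the stated logarithmic term and completes the proof. A sanity check on the two edge cases confirms the form of the bound: for $\rho\le\beta_2$ one has $\gamma=1$ and the penalty vanishes, matching the behavior of AVGrad, whereas for $\rho>\beta_2$ the penalty $[\ln(\rho/\beta_2)]_+$ is active, consistent with the Adam-like regime.
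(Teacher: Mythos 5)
Your proof is correct and follows essentially the same route as the paper's: the identity $b_t - a_t = \beta_2 v_{t-1} + (1-\rho)v^{(sum)}_{t-1}$, the logarithmic inequality, the telescoping sum, and the coefficient mismatch $\beta_2$ versus $\rho$ producing the $T\left[\ln(\rho/\beta_2)\right]_+$ penalty are all the same steps the paper takes. Your only cosmetic difference is packaging the termwise comparison into an explicit effective decay $\gamma=\min\{1,\beta_2/\rho\}$ before applying the log inequality, rather than comparing the two logarithms' arguments directly.
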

\begin{proof}
    Let $l_t = \sum_{j=1}^t \beta_2^{t-j}a_j $, $r_t = \sum_{\tau=1}^t \sum_{j=1}^\tau \beta_2^{\tau-j} a_j $, we have $b_t = \rho l_t + (1-\rho)r_t$. Since $b_t > a_t \geq 0 $ we have that $1-z \leq \exp^{-z} $ with $z = \frac{a_t}{\epsilon + b_t} < 1$. Hence, 
    \begin{align*}
        \frac{a_t}{\epsilon + b_t} & \leq \ln(\epsilon+b_t) - \ln(\epsilon + b_t - a_t)\\
        & = \ln(\epsilon+b_t) - \ln(\epsilon + \rho (l_t - a_t) + (1-\rho)(r_t - a_t)) \\
        & = \ln(\epsilon+b_t) - \ln\Big(\epsilon + \rho \beta_2l_{t-1} + (1-\rho)r_{t-1} + (1-\rho)(\sum_{j =1}^{t} \beta_2^{t-j} a_j - a_t)\Big) \\
        & = \ln(\epsilon+b_t) - \ln\Big(\epsilon + \rho \beta_2l_{t-1} + (1-\rho)r_{t-1} + (1-\rho)\beta_2 \underbrace{\sum_{j =1}^{t-1} \beta_2^{t-1-j} a_j}_{l_{t-1}}\Big) \\
        & = \ln(\epsilon+b_t) - \ln\left(\epsilon +\beta_2l_{t-1} + (1-\rho)r_{t-1} \right) \\
        & = \ln\left(\frac{\epsilon+b_t}{\epsilon+b_{t-1}}\right) + \ln\left(\frac{\epsilon+\rho l_{t-1} + (1-\rho) r_{t-1} }{\epsilon + \beta_2l_{t-1} + (1-\rho)r_{t-1}}\right)\\
        & \leq \ln\left(\frac{\epsilon+b_t}{\epsilon+b_{t-1}}\right) + \ln\left(\max\{1,\frac{\rho}{\beta_2}\}\right) \\
        & = \ln\left(\frac{\epsilon+b_t}{\epsilon+b_{t-1}}\right) + \left[\ln\left(\frac{\rho}{\beta_2}\right)\right]_+,
    \end{align*}
    where the first equality is due to definition of $b_t$. Summing the inequality above for $t = 1,2, \ldots, T$, yields that (recall that $b_0 = 0$)
    \begin{align*}
        \sum_{t=1}^T\frac{a_t}{\epsilon + b_t} \leq \ln\left(1 + \frac{b_T}{\epsilon}\right) + T \left[\ln\left(\frac{\rho}{\beta_2}\right)\right]_+
    \end{align*}
    Also note that $b_T \leq \frac{R^2 (\rho + (1-\rho) T)}{1-\beta_2}$.
\end{proof}
Using Lemma~\ref{thm3:lemma2} in \eqref{eq:after lemma 1}, dividing both sides by $T$, and after some algebra we have
\begin{align*}
    \frac{1}{T}\sum_{t=1}^T \|\nabla F(x_t) \|^2 & \leq \frac{2R\sqrt{\rho + (1-\rho)T}(F_0-F^*)}{\alpha T \sqrt{1-\beta_2}} \\
    & \quad + \frac{2R\sqrt{\rho + (1-\rho)T}d}{\sqrt{1-\beta_2}T}
    \left( 2R+\alpha L \right) \left( \ln\left(1 + \frac{R^2 (\rho + (1-\rho) T)}{\epsilon(1-\beta_2)}\right) + T \left[\ln\left(\frac{\rho}{\beta_2}\right)\right]_+ \right)
\end{align*}
which concludes the proof.

%%%%%%% ^^^ TO THE APPENDIX %%%%%%%%%%%

\section{Convergence of AVGrad} \label{app:avgrad}

Using the notation and assumptions in Section~\ref{sec:mada}, we give the following convergence results for AVGrad. 

\begin{theorem}[Convergence of AVGrad without momentum] \label{thm1} Under the above assumptions and $\alpha_t=\frac{\alpha}{\sqrt{t}}$ for some $\alpha > 0$, we have:

\begin{align*}
    \frac{1}{T}\sum_{t=1}^T \|\nabla F(x_t) \|^2 \leq \frac{2R(F_0-F^*)}{\alpha \sqrt{T} \sqrt{1-\beta_2}} + \frac{2Rd}{\sqrt{T}\sqrt{1-\beta_2}}\left( 2R+\alpha L \right) \ln\left(1+\frac{R^2T}{(1-\beta_2)\epsilon}\right).
\end{align*}

\end{theorem}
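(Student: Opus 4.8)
The cleanest route is to recognize that AVGrad is precisely the $\rho=0$ endpoint of the interpolated family analyzed in Theorem~\ref{thm3}: when $\rho=0$ the normalizer $\tfrac{\rho}{t}v_t+(1-\rho)v_t^{(avg)}$ collapses to $v_t^{(avg)}$, which is exactly the AVGrad second moment. Substituting $\rho=0$ into the precise statement of Theorem~\ref{thm3} yields $\sqrt{\rho+(1-\rho)T}=\sqrt{T}$ and $[\ln(\rho/\beta_2)]_+=0$ (with the convention $\ln 0 = -\infty$), and the $R^2(\rho+(1-\rho)T)$ inside the logarithm becomes $R^2T$; this reproduces the claimed bound verbatim. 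So in principle no separate argument is needed. Nevertheless, since Theorem~\ref{thm1} is the base case whose momentum extension is Theorem~\ref{thm2}, I would also record the self-contained derivation, which adapts the Adagrad/Adam analysis of \cite{defossez2022a} to the running-sum normalizer $v^{(sum)}_{t,i}=\sum_{j=1}^t v_{j,i}$.

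The skeleton mirrors the proof of Theorem~\ref{thm3}. Set $\beta_1=0$ so that $m_{t,i}=\nabla_i f_t(x_{t-1})$, and write the update as $x_{t,i}=x_{t-1,i}-\alpha u_{t,i}$ with $u_{t,i}=\nabla_i f_t(x_{t-1})/\sqrt{\epsilon+v^{(sum)}_{t,i}}$. Applying the Descent Lemma from $L$-smoothness and then taking the conditional expectation $\bbE_{t-1}$ gives an inequality of the form $\bbE_{t-1}F(x_t)\leq F(x_{t-1})-\alpha\sum_i\bbE_{t-1}[\nabla_iF\cdot\nabla_if_t/\sqrt{\epsilon+\psi_{t,i}}]+\tfrac{\alpha^2L}{2}\bbE_{t-1}\|u_t\|^2$, with $\psi_{t,i}=v^{(sum)}_{t,i}$ the $\rho=0$ case of $\bar\psi$. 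The cross term is handled by Lemma~\ref{thm3:lemma1} specialized to $\rho=0$, for which $\tilde\psi_{t,i}=v^{(sum)}_{t-1,i}+\beta_2 v_{t-1,i}+\bbE_{t-1}\nabla_if_t(x_{t-1})^2$; this lower-bounds the expected descent by $\nabla_iF(x_{t-1})^2/(2\sqrt{\epsilon+\tilde\psi_{t,i}})$ minus the noise correction $2R\,\bbE_{t-1}[\nabla_if_t^2/(\epsilon+\psi_{t,i})]$.

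Two remaining bounds finish the argument. First, $\tilde\psi_{t,i}\leq R^2 t/(1-\beta_2)$, since each $v_{j,i}\leq R^2/(1-\beta_2)$ and we sum $t$ of them; this converts $1/\sqrt{\epsilon+\tilde\psi_{t,i}}$ into the factor $\sqrt{1-\beta_2}/(R\sqrt{t})$, which after summation and $\sqrt{t}\leq\sqrt{T}$ produces the $1/\sqrt{T}$ rate. Second, the accumulated correction $\sum_t\bbE\|u_t\|^2=\sum_t\sum_i\bbE[\nabla_if_t^2/(\epsilon+v^{(sum)}_{t,i})]$ is controlled by Lemma~\ref{thm3:lemma2} with $\rho=0$, where the penalty $[\ln(\rho/\beta_2)]_+$ drops out and $b_T\leq R^2T/(1-\beta_2)$, leaving the factor $\ln(1+R^2T/(\epsilon(1-\beta_2)))$. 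Telescoping $F(x_0)-\bbE F(x_T)\leq F_0-F^*$ and collecting the $d$ coordinates assembles the stated inequality.

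The main obstacle, as in Theorem~\ref{thm3}, is Lemma~\ref{thm3:lemma1}: the true normalizer $\psi_{t,i}$ depends on the current stochastic gradient and is therefore correlated with $\nabla_if_t$, so one cannot directly take the expectation through the ratio. The fix is to compare against the decorrelated surrogate $\tilde\psi_{t,i}$ (last gradient replaced by its conditional expectation) and to bound the discrepancy $A$ via the identity $\tilde\psi-\psi=\bbE_{t-1}[g^2]-g^2$ together with Young's inequality. For AVGrad the running-sum structure only alters which upper bounds on $\tilde\psi_{t,i}$ and $b_T$ are used; no new estimate is required. Since both lemmas are already established for general $\rho$, the AVGrad bound follows by specialization without additional ideas.
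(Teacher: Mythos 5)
Your proposal is correct and matches the paper's own proof, which likewise obtains Theorem~\ref{thm1} by setting $\rho=0$ in the proof of Theorem~\ref{thm3} (so that $\bar\psi_{t,i}$ reduces to $v^{(sum)}_{t,i}$, $\sqrt{\rho+(1-\rho)T}=\sqrt{T}$, and the $[\ln(\rho/\beta_2)]_+$ penalty vanishes). Your additional self-contained walk-through of the Descent Lemma, Lemma~\ref{thm3:lemma1}, and Lemma~\ref{thm3:lemma2} at $\rho=0$ simply makes explicit the specialization the paper leaves implicit.
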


\begin{theorem}[Convergence of AVGrad with momentum]\label{thm2}Let $\tau_T \in \{0, \ldots , T-1\}$ denote a random index such that $\forall j \in \mathbb{N}, j<T, \mathbb{P}[\tau=j] \propto 1-\beta_1^{T-j}$. Under the above assumptions and $\alpha_t=\frac{\alpha}{\sqrt{t}}$ for some $\alpha > 0$, we have:
\begin{align*}
    \bbE \|\nabla F(x_\tau) \|^2 \leq \frac{2(1-\beta_1)R\sqrt{T}}{\alpha \sqrt{1-\beta_2} \tilde T}(F(x_0)-F^*) + C\frac{\sqrt{T}d}{\tilde{T}} \ln\left(1+\frac{R^2T}{(1-\beta_2)\epsilon}\right),
\end{align*}
where $C=\frac{\alpha RL}{ \sqrt{1-\beta_2}(1-\beta_1)} + \frac{2\beta_1\alpha^2L^2}{(1-\beta_2)(1-\beta_1)^3} + \frac{12R^2}{\sqrt{1-\beta_1}} \text{ and } \tilde{T} = T - \frac{\beta_1}{1-\beta_1}$.
\end{theorem}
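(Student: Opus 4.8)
The plan is to prove Theorem~\ref{thm2} by lifting the $\beta_1 = 0$ argument of Theorem~\ref{thm3} (specialized to $\rho = 0$, so the normalizer becomes the running sum $v^{(sum)}_{t,i}$ and the term $[\ln(\rho/\beta_2)]_+$ drops out, recovering exactly Theorem~\ref{thm1}) into the heavy-ball regime $\beta_1 > 0$, following the momentum treatment of \cite{defossez2022a}. Both structural lemmas carry over unchanged: Lemma~\ref{thm3:lemma1} still lower-bounds a single matched inner product by $\tfrac12\nabla_i F^2/\sqrt{\epsilon+\tilde\psi}$ minus a $g^2/(\epsilon+v^{(sum)})$ error, and Lemma~\ref{thm3:lemma2} at $\rho=0$ still gives $\sum_{t=1}^T g_{t,i}^2/(\epsilon+v^{(sum)}_{t,i}) \le \ln(1 + R^2 T/((1-\beta_2)\epsilon))$ per coordinate. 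First I would apply the Descent Lemma to $x_t = x_{t-1} - \alpha u_t$ with $u_{t,i} = m_{t,i}/\sqrt{\epsilon + v^{(sum)}_{t,i}}$ and take the conditional expectation $\bbE_{t-1}$, exactly as in \eqref{eq:cond exp}, then sum over $t = 1,\dots,T$.

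The new difficulty relative to $\beta_1 = 0$ is that the momentum $m_{t,i} = \sum_{k=1}^t \beta_1^{t-k}\nabla_i f_k(x_{k-1})$ aggregates stochastic gradients evaluated at \emph{past} iterates, so $\bbE_{t-1}[\nabla F(x_{t-1})^\top u_t]$ is no longer a clean multiple of $\|\nabla F(x_{t-1})\|^2$. The core of the argument is therefore to unroll the momentum and reorganize the resulting double time-sum by the index of the gradient. Writing each contribution as $\beta_1^{t-k}\nabla_i F(x_{t-1})\nabla_i f_k(x_{k-1})$, I would split it into a \emph{matched} part, in which $\nabla F(x_{t-1})$ is replaced by $\nabla F(x_{k-1})$ and Lemma~\ref{thm3:lemma1} is invoked, and a \emph{drift} part measuring $\nabla F(x_{t-1}) - \nabla F(x_{k-1})$. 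The drift is controlled by $L$-smoothness through $\|\nabla F(x_{t-1}) - \nabla F(x_{k-1})\| \le L\sum_{l=k}^{t-1}\alpha_l\|u_l\|$. Summing over $t$, the total weight accumulated on the gradient at step $j$ is $\sum_{t \ge j}\beta_1^{t-j} = (1-\beta_1^{T-j+1})/(1-\beta_1)$, which is exactly what forces the random index $\tau$ with $\mathbb{P}[\tau=j]\propto 1-\beta_1^{T-j}$ and the normalization $\tilde T = \sum_j(1-\beta_1^{T-j}) \ge T - \beta_1/(1-\beta_1)$ to appear, with the $(1-\beta_1)$ factors supplying the leading constant.

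To finish, I would bound the accumulated drift together with the smoothness term $\tfrac{\alpha^2 L}{2}\sum_t\|u_t\|^2$ by applying Lemma~\ref{thm3:lemma2} (the $\rho=0$ instance) coordinatewise, yielding the $d\,\ln(1 + R^2T/((1-\beta_2)\epsilon))$ factor; the three groups of error terms — the first-order descent term, the smoothness term, and the momentum-drift term — then assemble into the three summands of $C$, with the drift producing the characteristic $\tfrac{2\beta_1\alpha^2L^2}{(1-\beta_2)(1-\beta_1)^3}$ contribution that vanishes as $\beta_1 \to 0$. I expect the main obstacle to be precisely the bookkeeping of this drift term: bounding $\sum_t \beta_1^{t-k}\|x_{t-1}-x_{k-1}\|$ by telescoping over the update magnitudes and summing the nested geometric series in $\beta_1$, while keeping the normalizer $\sqrt{\epsilon + v^{(sum)}}$ consistent across the different time indices so as to reuse Lemma~\ref{thm3:lemma2} without losing a factor of $T$. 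This is the step that pins down the $(1-\beta_1)^{-3}$ power and demands the most care.
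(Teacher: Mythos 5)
Your proposal follows essentially the same route as the paper's proof: Descent Lemma plus conditional expectation, unrolling the momentum into a matched part handled by the $\rho=0$ descent-direction lemma and a drift part controlled by $L$-smoothness, reindexing so that the accumulated weights $\sum_{t\ge j}\beta_1^{t-j}$ produce the random index $\tau$ and the normalization $\tilde T$, and a log-sum bound for the error terms, exactly as in the paper's Lemma~\ref{lemma3} and the surrounding argument. The one detail you gloss over is that $\sum_t\|u_t\|^2$ has the momentum $m_t$ rather than the raw gradient in the numerator, so Lemma~\ref{thm3:lemma2} at $\rho=0$ does not apply directly and one needs its momentum-weighted extension (the paper's Lemma~\ref{lemma4}, costing an extra $(1-\beta_1)^{-2}$) --- but you correctly flag this bookkeeping of $(1-\beta_1)$ powers as the delicate step, so the plan is sound.
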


\textbf{Remark.} Comparing our Theorems~\ref{thm1} and \ref{thm2} to Theorems 3 and 4 in \cite{defossez2022a}, we observe that AVGrad does not have the non-vanishing, constant term that Adam has (see \citep{defossez2022a}, Theorem 2). Moreover, unlike the result for Adam, we do not require $\beta_2 > \beta_1$. Analogous to how momentum slows down the convergence bound of algorithms (by multiplicative factors) \citep{defossez2022a} AVGrad slows down the convergence of Adagrad by multiplicative factors of $(1-\beta_2)$.

\subsection{Proof of Theorem \ref{thm1}}

Proof of convergence of AVGrad can be obtained by replacing the interpolated second order moment term $\bar \psi_t$, by $v_{t}^{(sum)}$ in the previous section. In other words, setting $\rho=0$ in the Proof of Theorem~\ref{thm3}, going through the analysis will give the desired result
\begin{align*}
    \frac{1}{T}\sum_{t=1}^T \|\nabla F(x_t) \|^2 \leq \frac{2R(F_0-F^*)}{\alpha \sqrt{T} \sqrt{1-\beta_2}} + \frac{2Rd}{\sqrt{1-\beta_2}\sqrt{T}}
    \left( 2R+\alpha L \right) \ln\left(1+\frac{TR^2}{(1-\beta_2)\epsilon}\right).
\end{align*}

\subsection{Proof of Theorem \ref{thm2} } 
The idea in this proof is to essentially change gradient terms in the Descent Lemma with first order moments, as the difference in consequent model weights will now depend on the moment term rather than a gradient at some time point. The necessary changes in the lemmas closely follow the proof for Theorem 3,4 in \citep{defossez2022a}. We start by redefining some iterative vectors. 

\begin{align*}
    & m_{t,i} = \beta_1 m_{t-1,i} + \nabla_i f_t(x_{t-1})\\
    & v_{t,i} = \beta_2 v_{t-1,i} + \nabla_i f_t(x_{t-1})^2\\
    & x_{t,i} = x_{t-1,i} - \alpha_t\frac{m_{t,i}}{\sqrt{\epsilon_t+v^{(avg)}_{t,i}}} = x_{t-1,i} - \alpha\frac{m_{t,i}}{\sqrt{\epsilon+v^{(sum)}_{t,i}}}
\end{align*}

Let us further define $G_t = \nabla F(x_{t-1}), g_t = \nabla f_t(x_{t-1}), u_{t,i}= \frac{m_{t,i}}{\sqrt{\epsilon+v^{(sum)}_{t,i}}}$ and $U_{t,i}= \frac{g_{t,i}}{\sqrt{\epsilon+v^{(sum)}_{t,i}}}$. And also define:
\begin{align*}
    \tilde v^{(sum)}_{t,k,i} = v^{(sum)}_{t-k,i} + \bbE_{t-k-1} [\sum_{\tau=t-k+1}^t \sum_{j=1}^\tau \beta_2^{\tau-j} g^2_{j,i}]
\end{align*}
note that for $j\leq t-k$ we have $\bbE_{t-k-1} [g^2_j] = g^2_j$, so $\tilde v^{(sum)}_{t,k,i}$ essentially replaces the contribution of last $k$ gradients with their expected values. From the smoothness of the objective function $F$, we have

\begin{align*}
    F(x_t) \leq F(x_{t-1}) - \alpha G^\top_t u_t + \frac{\alpha^2L}{2}\|u_t\|^2_2.
\end{align*}

Taking the expectations of both sides,
\begin{align}\label{eq:expected smoothness}
   \bbE [F(x_t)] \leq \bbE [F(x_{t-1})] - \alpha \sum_{i \in [d]} \bbE [G^\top_{t,i} u_{t,i}] + \frac{\alpha^2L}{2} \bbE [\|u_t\|^2_2].
\end{align}

To bound the second term on the right hand side, we introduce the following approximate descent lemma, whose proof is provided at the end of section.

\begin{lemma} \label{lemma3} [Updates approximately follow a descent direction] 
\begin{align*}
    \bbE \left[ G_{t,i} \frac{m_{t,i}}{\sqrt{\epsilon+v^{(sum)}_{t,i}}} \right] & \geq \frac{1}{2} \left( \sum_{i \in [d]}\sum_{k=0}^{t-1} \beta_1^k \bbE \left[ \frac{G^2_{t-k,i}}{\sqrt{\epsilon+\tilde v^{(sum)}_{t,k+1,i}}} \right] \right) - \frac{3R\sqrt{1-\beta_2}}{\sqrt{1-\beta_1}} \sum_{k=0}^{t-1} \sqrt{k+1} \beta_1^k \bbE[\| U_{t-k} \|^2] \notag \\
    & \quad - \frac{\alpha^2L^2}{4R\sqrt{1-\beta_2}}\sqrt{1-\beta_1} \sum_{l=1}^{t-1}\| u_{t-l}\|^2 \sum_{k=l}^{t-1} \beta_1^k \sqrt{k}.
    \end{align*}
\end{lemma}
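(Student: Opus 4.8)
The plan is to adapt the momentum analysis in \cite{defossez2022a} (their Lemma~5.3) to the AVGrad denominator $v^{(sum)}_{t,i}=\sum_{\tau=1}^t\sum_{j=1}^\tau\beta_2^{\tau-j}g_{j,i}^2$, which is a \emph{double} accumulation of past squared gradients rather than the single exponential moving average appearing in Adam. The starting point is to unroll the momentum as a geometric weighting of past stochastic gradients, $m_{t,i}=\sum_{k=0}^{t-1}\beta_1^k g_{t-k,i}$, so that
\[
\bbE\!\left[G_{t,i}\frac{m_{t,i}}{\sqrt{\epsilon+v^{(sum)}_{t,i}}}\right]
=\sum_{k=0}^{t-1}\beta_1^k\,\bbE\!\left[G_{t,i}\frac{g_{t-k,i}}{\sqrt{\epsilon+v^{(sum)}_{t,i}}}\right],
\]
and then to treat each lag-$k$ summand separately in three moves: (i) decorrelate the denominator from $g_{t-k,i}$; (ii) replace the numerator gradient $G_{t,i}=\nabla_i F(x_{t-1})$ by its lag-$k$ counterpart $G_{t-k,i}$; and (iii) take a conditional expectation so that $\bbE_{t-k-1}[g_{t-k,i}]=G_{t-k,i}$ produces the positive descent term $\tfrac12\beta_1^k\,\bbE[G_{t-k,i}^2/\sqrt{\epsilon+\tilde v^{(sum)}_{t,k+1,i}}]$.

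For the decorrelation in step~(i), I would replace $v^{(sum)}_{t,i}$ by $\tilde v^{(sum)}_{t,k+1,i}$, which is measurable with respect to $f_1,\dots,f_{t-k-1}$ (hence independent of $g_{t-k,i}$ conditioned on that past), precisely because it swaps the contribution of the last $k+1$ of the $v_\tau$ terms for their conditional expectations. Writing the difference of reciprocal square roots as
\[
\frac{1}{\sqrt{\epsilon+v^{(sum)}_{t,i}}}-\frac{1}{\sqrt{\epsilon+\tilde v^{(sum)}_{t,k+1,i}}}
=\frac{\tilde v^{(sum)}_{t,k+1,i}-v^{(sum)}_{t,i}}{\sqrt{\epsilon+v^{(sum)}_{t,i}}\sqrt{\epsilon+\tilde v^{(sum)}_{t,k+1,i}}\big(\sqrt{\epsilon+v^{(sum)}_{t,i}}+\sqrt{\epsilon+\tilde v^{(sum)}_{t,k+1,i}}\big)},
\]
the numerator is a weighted sum of the terms $\bbE_{t-k-1}[g_{j,i}^2]-g_{j,i}^2$ over the last few steps, each bounded via the almost-sure bound $|g|\le R$. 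Splitting the resulting products with the elementary inequality $xy\le \tfrac{\lambda}{2}x^2+\tfrac{1}{2\lambda}y^2$ from \eqref{fact1} (with $\lambda$ chosen to expose a clean $\sqrt{\epsilon+\tilde v}$ factor) turns these into the $U$-type terms; the double-sum structure of $v^{(sum)}$ together with the count of $k+1$ replaced gradients is what yields the $\sqrt{k+1}$ weighting and the factor $\sqrt{1-\beta_2}/\sqrt{1-\beta_1}$ in the first error term.

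For step~(ii) I would invoke smoothness, $\|G_t-G_{t-k}\|\le L\|x_{t-1}-x_{t-k-1}\|$, and telescope $x_{t-1}-x_{t-k-1}=-\alpha\sum_{l=1}^{k}u_{t-l}$ through the update rule, so that the mismatch contributes a quadratic form in the $u$'s; reindexing the resulting double sum over the lag $k$ and the telescope index $l$, and again applying \eqref{fact1}, produces the last error term with its $\sum_{k\ge l}\beta_1^k\sqrt{k}$ structure and $\alpha^2L^2$ prefactor. Collecting the main term from step~(iii), summing over coordinates $i\in[d]$, and bounding the geometric series then gives the stated inequality. I expect the main obstacle to be step~(i): because $v^{(sum)}$ accumulates \emph{all} past squared gradients in a nested sum, one must track exactly which contributions are replaced by conditional expectations, verify $\mathcal{F}_{t-k-1}$-measurability of $\tilde v^{(sum)}_{t,k+1,i}$, and sum the double-sum coefficients carefully to recover the clean $\sqrt{k+1}$ and $\sqrt{1-\beta_2}$ dependence — while keeping the bookkeeping terms from over-counting and preserving the $\tfrac12$ factor that guarantees the main term remains positive.
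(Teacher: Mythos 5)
Your plan follows essentially the same route as the paper's proof: unroll the momentum into a geometric sum over lags, split each lag-$k$ term into a gradient-mismatch part (handled by smoothness and telescoping $x_{t-1}-x_{t-k-1}=-\alpha\sum_{l=1}^{k}u_{t-l}$) and a decorrelation part (handled by swapping $v^{(sum)}_{t,i}$ for $\tilde v^{(sum)}_{t,k+1,i}$ via the difference-of-reciprocal-square-roots identity and the Young-type inequality \eqref{fact1}), with the conditional expectation producing the halved main term. The sketch correctly identifies the origin of the $\sqrt{k+1}$ and $\sqrt{1-\beta_2}$ factors and the key measurability point, so it matches the paper's argument in all essentials.
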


Using Lemma~\ref{lemma3} in \eqref{eq:expected smoothness} for the second term on right hand side, yields that
\begin{align}
    \bbE [F(x_t)] & \leq \bbE [F(x_{t-1})] - \frac{\alpha}{2}\left( \sum_{i \in [d]} \sum_{k=0}^{t-1} \beta_1^k \bbE \left[ \frac{G^2_{t-k,i}}{\sqrt{\epsilon + \tilde v^{(sum)}_{t,k+1,i}}} \right] \right) + \frac{3\alpha R\sqrt{1-\beta_2}}{\sqrt{1-\beta_1}} \sum_{k=0}^{t-1} \sqrt{k+1} \beta_1^k \bbE [\|U_{t-k}\|^2_2] \notag  \\
    & \quad + \frac{\alpha^3L^2}{4R\sqrt{1-\beta_2}} \sqrt{1-\beta_1} \sum_{l=1}^{t-1}\|u_{t-l}\|^2_2 \sum_{k=l}^{t-1} \beta_1^k \sqrt{k}  + \frac{\alpha^2L}{2}\bbE [\|u_t\|^2_2]. \label{eq:after l3}
\end{align}
Let us define $\Omega_t := \sqrt{\sum_{\tau=1}^t \sum_{j=1}^\tau \beta_2^{\tau-j}}$, hence, from our boundedness assumption, $\sqrt{\epsilon + \tilde v^{(sum)}_{t,k+1,i}} \leq R \sqrt{\sum_{\tau=1}^t \sum_{j=1}^\tau \beta_2^{\tau-j}} = R\Omega_t$, inserting in \eqref{eq:after l3} implies that 
\begin{align*}
    \bbE [F(x_t)] & \leq \bbE [F(x_{t-1})] - \frac{\alpha}{2R\Omega_t}\left( \sum_{k=0}^{t-1} \beta_1^k \bbE \left[ G^2_{t-k,i} \right] \right) + \frac{\alpha^2L}{2}\bbE \|u_t\|^2_2 + \frac{3\alpha R\sqrt{1-\beta_2}}{\sqrt{1-\beta_1}} \sum_{k=0}^{t-1} \sqrt{k+1} \beta_1^k \bbE \|U_{t-k}\|^2_2 \notag \\
    & \quad + \frac{\alpha^3L^2}{4R\sqrt{1-\beta_2}} \sqrt{1-\beta_1} \sum_{l=1}^{t-1}\|u_{t-l}\|^2_2 \sum_{k=l}^{t-1} \beta_1^k \sqrt{k}
\end{align*}

Summing for $t=1, \ldots, T$ results in
\begin{align}
    \underbrace{\frac{\alpha}{2R}\sum_{t=1}^T \frac{1}{\Omega_t}\sum_{k = 0}^{t-1}\beta_1^k \bbE \|G^2_{t-k}\|^2_2 }_A & \leq F(x_0)-F^* + \underbrace{\frac{\alpha^2L}{2} \sum_{t=1}^T \bbE [\|u_t\|^2]}_B+\underbrace{\frac{\alpha^3L^2}{4R\sqrt{1-\beta_2}} \sqrt{1-\beta_1} \sum_{t=1}^T \sum_{l=1}^{t-1}\bbE[\|u_{t-l}\|^2] \sum_{k=l}^{t-1} \beta_1^k \sqrt{k}}_C \notag \\
    & \quad + \underbrace{\frac{3\alpha R\sqrt{1-\beta_2}}{\sqrt{1-\beta_1}} \sum_{t=1}^T \sum_{k=0}^{t-1} \sqrt{k+1} \beta_1^k \bbE [\|U_{t-k}\|^2]}_D. \label{eq:all_terms}
\end{align}

We will examine each term separately, for $B$, we state the following lemma (whose proof is given at the end of section) to bound the sum of squared norm term

\begin{lemma}\label{lemma4}
    Assume, $0 \leq \beta_1 < 1$, $0 \leq \beta_2 < 1$ and we have sequence of real numbers $(a_t)_{t\in[T]}$. Let $c_t = \sum_{\tau = 1}^t \beta_1^{t-\tau}a_t$, $b_t = \sum_{\tau = 1}^t\sum_{j=1}^\tau \beta_2^{\tau-j}a^2_t$. Then, 
    \begin{align*}
        \sum_{t=1}^T \frac{c_t^2}{\epsilon+b_t} \leq \frac{1}{(1-\beta_1)^2}\ln{\left( 1+\frac{b_T}{\epsilon}\right)}.
    \end{align*}
\end{lemma}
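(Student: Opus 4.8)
The plan is to reduce the mixed first/second-moment ratio to a pure second-moment sum and then finish with a telescoping argument (which is essentially the $\rho=0$ instance of Lemma~\ref{thm3:lemma2}). Reading the definitions with their summation indices as intended, $c_t=\sum_{\tau=1}^t\beta_1^{t-\tau}a_\tau$ and $b_t=\sum_{\tau=1}^t\sum_{j=1}^\tau\beta_2^{\tau-j}a_j^2$, the first step is to tame the numerator $c_t^2$. Splitting $\beta_1^{t-\tau}=\beta_1^{(t-\tau)/2}\cdot\beta_1^{(t-\tau)/2}$ and applying Cauchy--Schwarz gives
\[
c_t^2 \le \Big(\sum_{\tau=1}^t\beta_1^{t-\tau}\Big)\Big(\sum_{\tau=1}^t\beta_1^{t-\tau}a_\tau^2\Big)\le\frac{1}{1-\beta_1}\sum_{\tau=1}^t\beta_1^{t-\tau}a_\tau^2 .
\]

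The second step exploits monotonicity of the denominator. Since every summand of $b_t$ is nonnegative, $b_t$ is non-decreasing in $t$, so $\epsilon+b_t\ge\epsilon+b_\tau$ for $\tau\le t$. Dividing the previous display by $\epsilon+b_t$ and replacing $b_t$ by $b_\tau$ termwise yields $\frac{c_t^2}{\epsilon+b_t}\le\frac{1}{1-\beta_1}\sum_{\tau=1}^t\frac{\beta_1^{t-\tau}a_\tau^2}{\epsilon+b_\tau}$. Summing over $t$, swapping the order of summation, and bounding the inner geometric sum $\sum_{t=\tau}^T\beta_1^{t-\tau}\le\frac{1}{1-\beta_1}$ produces the second factor of $\frac{1}{1-\beta_1}$:
\[
\sum_{t=1}^T\frac{c_t^2}{\epsilon+b_t}\le\frac{1}{(1-\beta_1)^2}\sum_{\tau=1}^T\frac{a_\tau^2}{\epsilon+b_\tau}.
\]

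The final step bounds the pure second-moment sum. The key observation is that $b_\tau-b_{\tau-1}=\sum_{j=1}^\tau\beta_2^{\tau-j}a_j^2\ge a_\tau^2$, since the $j=\tau$ term alone contributes $a_\tau^2$ and the rest are nonnegative. Hence $\frac{a_\tau^2}{\epsilon+b_\tau}\le\frac{(\epsilon+b_\tau)-(\epsilon+b_{\tau-1})}{\epsilon+b_\tau}=1-\frac{\epsilon+b_{\tau-1}}{\epsilon+b_\tau}\le\ln\frac{\epsilon+b_\tau}{\epsilon+b_{\tau-1}}$, using $1-1/x\le\ln x$ for $x\ge1$. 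Telescoping over $\tau$ with $b_0=0$ collapses the sum to $\ln\frac{\epsilon+b_T}{\epsilon}=\ln(1+b_T/\epsilon)$, and combining the three displays gives the claim. Equivalently, this last step is the $\rho=0$ case of Lemma~\ref{thm3:lemma2} applied to the nonnegative sequence $(a_j^2)$, so it can simply be cited.

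The only substantive departure from the standard Adagrad/Adam bound (Lemma 5.2 of \cite{defossez2022a}) is the double-summation structure of $b_t$ coming from AVGrad's running average of second moments, and this is where I would focus the care. The thing to verify is that this structure does not break the telescoping — and in fact it helps: unlike the single-sum $b_t=\sum_j\beta_2^{t-j}a_j^2$, which need not be monotone, the double sum is manifestly non-decreasing, so the replacement $\epsilon+b_t\ge\epsilon+b_\tau$ in step two is immediate, and its increment still dominates $a_\tau^2$. Confirming these two facts ($b_t$ non-decreasing and $b_\tau-b_{\tau-1}\ge a_\tau^2$) is the crux; everything else is the Cauchy--Schwarz and swap-of-summation boilerplate shared with the reference.
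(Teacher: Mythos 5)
Your proof is correct and follows essentially the same route as the paper: Cauchy--Schwarz on $c_t^2$, monotonicity of the double-sum $b_t$ to replace $b_t$ by $b_\tau$, a swap of summation with the geometric bound for the second $\frac{1}{1-\beta_1}$ factor, and the telescoping log bound for $\sum_\tau a_\tau^2/(\epsilon+b_\tau)$ (which the paper obtains by citing Lemma~\ref{thm3:lemma2} with $\rho=0$, exactly as you note). The only difference is that you write out that final telescoping step explicitly, which is fine.
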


Lemma~\ref{lemma4} implies that 
\begin{align}\label{prf2:B}
    \frac{\alpha^2L}{2} \sum_{t=1}^T \bbE [\|u_t\|^2] \leq \frac{\alpha^2L}{2(1-\beta_1)^2}\sum_{i \in [d]}\ln \left( 1+\frac{v^{(sum)}_{T,i}}{\epsilon} \right) \leq \frac{\alpha^2L}{2(1-\beta_1)^2}\sum_{i \in [d]}\ln \left( 1+\frac{R^2T}{(1-\beta_2)\epsilon} \right).
\end{align}

Before moving on with other terms, we state some useful facts that are proven in \cite{defossez2022a}.
\begin{fact}\label{fact:q}
    Given $0<a<1$ and $Q \in \mathbb{N}$ we have,
    \begin{align*}
        \sum_{q=0}^{Q-1}a^q q \leq \frac{a}{(1-a)^2}.
    \end{align*}
\end{fact}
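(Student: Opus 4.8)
The plan is to reduce the finite sum to the corresponding infinite series, which has a well-known closed form. Since $0<a<1$ and $q\ge 0$, every summand $qa^q$ is nonnegative, so the partial sum is dominated by the full series: $\sum_{q=0}^{Q-1} q a^q \le \sum_{q=0}^{\infty} q a^q$. It therefore suffices to evaluate the infinite series and verify that it equals $\frac{a}{(1-a)^2}$.

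To evaluate $\sum_{q=0}^{\infty} q a^q$, I would start from the geometric series $\sum_{q=0}^{\infty} a^q = \frac{1}{1-a}$, valid for $|a|<1$. Differentiating both sides in $a$ --- termwise differentiation being justified by uniform convergence of the power series on compact subsets of $(-1,1)$ --- gives $\sum_{q=1}^{\infty} q a^{q-1} = \frac{1}{(1-a)^2}$. Multiplying by $a$ and noting that the $q=0$ term vanishes yields $\sum_{q=0}^{\infty} q a^q = \frac{a}{(1-a)^2}$, which is exactly the claimed right-hand side.

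As an elementary alternative that avoids calculus, I would instead invoke the exact finite-sum identity $\sum_{q=0}^{Q-1} q a^q = \frac{a - Q a^{Q} + (Q-1)a^{Q+1}}{(1-a)^2}$ (provable by multiplying the sum by $(1-a)$ and telescoping). The correction relative to the target bound is $-Qa^Q + (Q-1)a^{Q+1} = -Qa^Q(1-a) - a^{Q+1}$, which is nonpositive for $0<a<1$ and $Q\ge 1$; hence the finite sum is at most $\frac{a}{(1-a)^2}$. This route has the minor advantage of exhibiting the slack explicitly.

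There is essentially no conceptual obstacle: the result is a one-line consequence of a standard power-series identity. The only points requiring a word of care are the justification of termwise differentiation in the first approach, and, in the elementary approach, the sign check $-Qa^Q(1-a)-a^{Q+1}\le 0$; both are immediate. The nonnegativity of the summands, which legitimizes comparing the partial sum to the full series, follows directly from $0<a<1$.
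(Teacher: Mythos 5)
Your proof is correct: bounding the partial sum of nonnegative terms by the full series $\sum_{q=0}^{\infty} q a^q = \frac{a}{(1-a)^2}$ (obtained by differentiating the geometric series) immediately gives the claim, and your elementary finite-sum identity with the explicit nonpositive correction term is also valid. The paper itself gives no proof for this fact—it simply cites \citet{defossez2022a}—so your argument supplies the standard justification that the paper omits.
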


\begin{fact}\label{fact:sqq}
    Given $0<a<1$ and $Q \in \mathbb{N}$ we have,
    \begin{align*}
        \sum_{q=0}^{Q-1}a^q \sqrt{q} \leq \frac{2}{(1-a)^{3/2}}.
    \end{align*}
\end{fact}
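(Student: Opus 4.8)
The plan is to reduce the weighted sum $\sum_{q=0}^{Q-1} a^q \sqrt q$ to two elementary sums that are already under control, namely a truncated geometric series and the arithmetic--geometric sum bounded in \cref{fact:q}. The only feature that obstructs a direct evaluation is the factor $\sqrt q$, which I would eliminate by splitting the weight $a^q$ symmetrically and applying the Cauchy--Schwarz inequality; this trades the awkward $\sqrt q$ for the much friendlier linear factor $q$ that \cref{fact:q} already handles.

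Concretely, I would first note that the $q=0$ summand vanishes, so nothing is lost by letting the sum start at $q=0$. I would then write $a^q \sqrt q = a^{q/2}\cdot\bigl(a^{q/2}\sqrt q\bigr)$ and apply Cauchy--Schwarz over the index set $\{0,\dots,Q-1\}$, obtaining
\[
\sum_{q=0}^{Q-1} a^q \sqrt q \;\le\; \left( \sum_{q=0}^{Q-1} a^q \right)^{1/2}\left( \sum_{q=0}^{Q-1} a^q q \right)^{1/2}.
\]
For the first factor I would use the truncated geometric bound $\sum_{q=0}^{Q-1} a^q \le \sum_{q=0}^{\infty} a^q = \frac{1}{1-a}$, and for the second I would invoke \cref{fact:q} directly, $\sum_{q=0}^{Q-1} a^q q \le \frac{a}{(1-a)^2}$. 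Multiplying the two factors yields $\frac{\sqrt a}{(1-a)^{3/2}}$, and since $0<a<1$ gives $\sqrt a \le 1 \le 2$, this is at most $\frac{2}{(1-a)^{3/2}}$, which is the claimed inequality.

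I do not expect a genuine obstacle here: the whole statement follows from a single application of Cauchy--Schwarz together with the already-established \cref{fact:q}. The only real decision is how to distribute the weight $a^q$ between the two Cauchy--Schwarz factors, and the symmetric split $a^{q/2}\cdot a^{q/2}$ is precisely what renders both resulting sums elementary. Moreover the argument is lossy, since it actually delivers the sharper constant $\sqrt a \le 1$ in place of the stated $2$; hence there is no need to optimize the split or to sharpen any intermediate estimate. An integral-comparison route would also close the gap, but it is less clean because $a^q\sqrt q$ is not monotone in $q$, so the Cauchy--Schwarz path is the more economical one.
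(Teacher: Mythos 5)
Your argument is correct: Cauchy--Schwarz with the symmetric split $a^q\sqrt q = a^{q/2}\cdot(a^{q/2}\sqrt q)$ gives
\[
\sum_{q=0}^{Q-1}a^q\sqrt q \;\le\; \Big(\sum_{q=0}^{Q-1}a^q\Big)^{1/2}\Big(\sum_{q=0}^{Q-1}a^q q\Big)^{1/2}\;\le\;\Big(\tfrac{1}{1-a}\Big)^{1/2}\Big(\tfrac{a}{(1-a)^2}\Big)^{1/2}=\tfrac{\sqrt a}{(1-a)^{3/2}},
\]
which is stronger than the stated bound since $\sqrt a\le 1\le 2$. For comparison, the paper does not prove \cref{fact:sqq} at all; it states it (together with \cref{fact:q}) as a fact ``proven in \citep{defossez2022a}'' and cites that reference. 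So your proof is not a variant of the paper's argument but a self-contained replacement for the citation, and a legitimate one, since it only relies on \cref{fact:q}, which the paper treats as already established (and which is itself elementary). The one thing worth being aware of is that your chain of facts still bottoms out in \cref{fact:q}, which in this paper is likewise only cited; if you wanted a fully self-contained appendix you would also supply the standard derivative-of-geometric-series (or Abel summation) argument for $\sum_{q\ge 0} a^q q = a/(1-a)^2$. As you note, the constant $2$ in the statement is not tight under your route, so no optimization of the split is needed.
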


\begin{fact}\label{fact:sqqq}
    Given $0<a<1$ and $Q \in \mathbb{N}$ we have,
    \begin{align*}
        \sum_{q=0}^{Q-1}a^q \sqrt{q} (q+1) \leq \frac{4a}{(1-a)^{5/2}}.
    \end{align*}
\end{fact}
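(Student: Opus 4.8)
The plan is to pass to the infinite sum (legitimate since every term is nonnegative and $0<a<1$ guarantees convergence), and then to exploit the algebraic identity $\sqrt q\,(q+1)=q^{3/2}+q^{1/2}$ so that the target splits as $\sum_{q\ge0}a^q\sqrt q(q+1)=T+S$ with $T:=\sum_{q\ge0}a^q q^{3/2}$ and $S:=\sum_{q\ge0}a^q q^{1/2}$. The two pieces behave differently and must be handled by different arguments; the guiding principle is that each bound must retain exactly one factor of $a$ in the numerator, since the $q=0$ terms vanish and the true sum is of order $a$ (not $\sqrt a$) as $a\to0$.

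For $T$ I would apply Cauchy--Schwarz in the form $a^q q^{3/2}=(a^{q/2}q^{1/2})(a^{q/2}q)$, giving $T\le(\sum_q a^q q)^{1/2}(\sum_q a^q q^2)^{1/2}$. Here $\sum_q a^q q=a/(1-a)^2$ (this is Fact~\ref{fact:q}) and $\sum_q a^q q^2=a(1+a)/(1-a)^3\le 2a/(1-a)^3$ (a standard power-series identity, obtained by applying $x\tfrac{d}{dx}$ twice to $1/(1-x)$ and using $1+a\le2$). Multiplying the two square roots yields the clean estimate $T\le\sqrt2\,a/(1-a)^{5/2}$, which already carries both the correct factor $a$ and the correct power $(1-a)^{-5/2}$, precisely because both Cauchy--Schwarz factors involve a positive power of $q$ and hence vanish at $a=0$.

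For $S$ a naive Cauchy--Schwarz against the weight $a^q$ would reintroduce a spurious $\sqrt a$ (since $\sum_q a^q=1/(1-a)$ does not vanish at $a=0$), and Fact~\ref{fact:sqq} is likewise too weak near $a=0$; this is the main obstacle. I would instead use a self-consistency/shift argument: reindexing $S=a\sum_{p\ge0}a^p\sqrt{p+1}$ and applying the elementary inequality $\sqrt{p+1}\le\sqrt p+1$ gives $S\le a\,(S+1/(1-a))$, hence $S\le a/(1-a)^2\le a/(1-a)^{5/2}$ (using $(1-a)^{5/2}\le(1-a)^2$ for $0<a<1$).

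Finally I would add the two estimates: $T+S\le(\sqrt2+1)\,a/(1-a)^{5/2}\le 4a/(1-a)^{5/2}$, since $\sqrt2+1<4$, which is the claim (and the finite sum $\sum_{q=0}^{Q-1}$ is dominated by the infinite one by nonnegativity). I expect the self-consistency bound on $S$ to be the only genuinely nonroutine step; everything else is a direct application of Cauchy--Schwarz and standard geometric-series identities, and the generous constant $4$ leaves ample slack at both endpoints $a\to0$ and $a\to1$.
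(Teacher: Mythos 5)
Your proof is correct, and it is worth noting at the outset that the paper contains no proof of this fact at all: Facts~\ref{fact:q}--\ref{fact:sqqq} are stated with the remark that they "are proven in \citep{defossez2022a}", so any self-contained argument is necessarily a new route. Your derivation checks out step by step: the split $\sqrt q\,(q+1)=q^{3/2}+q^{1/2}$ is exact; the Cauchy--Schwarz factorization $a^q q^{3/2}=(a^{q/2}q^{1/2})(a^{q/2}q)$ together with $\sum_q a^q q = a/(1-a)^2$ and $\sum_q a^q q^2 = a(1+a)/(1-a)^3 \le 2a/(1-a)^3$ gives $T\le \sqrt2\,a/(1-a)^{5/2}$; the shift argument $S=a\sum_{p\ge0}a^p\sqrt{p+1}\le a\bigl(S+\tfrac{1}{1-a}\bigr)$ (valid since $S<\infty$ for $0<a<1$, and $\sqrt{p+1}\le\sqrt p+1$) yields $S\le a/(1-a)^2\le a/(1-a)^{5/2}$; and $\sqrt2+1<4$ closes the bound, with the finite sum dominated by the infinite one by nonnegativity. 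Your diagnosis of the delicate point is also apt: Fact~\ref{fact:sqq} bounds $S$ only by $2/(1-a)^{3/2}$, which does not vanish as $a\to0$, whereas the target is $O(a)$, so the self-consistency step is genuinely needed to retain the factor of $a$. Compared with the cited source, whose series estimates are obtained by different elementary manipulations, your argument buys a fully self-contained proof that in fact establishes the sharper constant $\sqrt2+1\approx 2.41$ in place of $4$; the only mildly nonroutine ingredient is the shift trick for $S$, which you correctly identify as such.
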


Now we move onto examining other terms in \eqref{eq:all_terms}. For $C$, we make the following change in the index $j = t-l$ which yields the following steps

\begin{align}\label{prf2:C}
    \frac{\alpha^3L^2}{4R\sqrt{1-\beta_2}} \sqrt{1-\beta_1} \sum_{t=1}^T \sum_{l=1}^{t-1}\bbE[\|u_{t-l}\|^2] \sum_{k=l}^{t-1} \beta_1^k \sqrt{k} & = \frac{\alpha^3L^2}{4R\sqrt{1-\beta_2}} \sqrt{1-\beta_1} \sum_{t=1}^T \sum_{j=1}^{t}\bbE[\|u_{j}\|^2] \sum_{k=t-j}^{t-1} \beta_1^k \sqrt{k} \notag \\
    & = \frac{\alpha^3L^2}{4R\sqrt{1-\beta_2}} \sqrt{1-\beta_1} \sum_{j=1}^T \bbE[\|u_{j}\|^2] \sum_{t=j}^{T} \sum_{k=t-j}^{t-1} \beta_1^k \sqrt{k} \notag \\
    & = \frac{\alpha^3L^2}{4R\sqrt{1-\beta_2}} \sqrt{1-\beta_1} \sum_{j=1}^T \bbE[\|u_{j}\|^2] \sum_{k=0}^{T-1} \beta_1^k \sqrt{k} \sum_{t=j}^{j+k}  1 \notag \\
    & = \frac{\alpha^3L^2}{4R\sqrt{1-\beta_2}} \sqrt{1-\beta_1} \sum_{j=1}^T \bbE[\|u_{j}\|^2] \sum_{k=0}^{T-1} \beta_1^k \sqrt{k}(k+1)\notag \\
    & \stackrel{(a)}{\leq} \frac{\alpha^3L^2}{R\sqrt{1-\beta_2}} \sum_{j=1}^T \bbE[\|u_{j}\|^2] \frac{\beta_1}{(1-\beta_1)^2}\notag \\
    & \stackrel{(b)}{\leq} \frac{\alpha^3L^2}{R\sqrt{1-\beta_2}} \frac{\beta_1}{(1-\beta_1)^4} \sum_{i\in [d]} \ln \left( 1+\frac{v^{(sum)}_{T,i}}{\epsilon} \right) \\
    & \stackrel{(c)}{\leq} \frac{\alpha^3L^2}{R\sqrt{1-\beta_2}} \frac{\beta_1}{(1-\beta_1)^4} \sum_{i\in [d]} \ln \left( 1+\frac{R^2T}{(1-\beta_2)\epsilon} \right),
\end{align}
where in (a) we use Fact~\ref{fact:sqqq}, in (b) we use Lemma~\ref{lemma4}, and in (c) the definition of $v_{T,i}^{(sum)}$. For $D$, we make the following change in the index $j = t-k$, and obtain that
\begin{align}\label{prf2:D}
    \frac{3\alpha R\sqrt{1-\beta_2}}{\sqrt{1-\beta_1}} \sum_{t=1}^T \sum_{k=0}^{t-1} \sqrt{k+1} \beta_1^k \bbE [\|U_{t-k}\|^2] & = \frac{3\alpha R\sqrt{1-\beta_2}}{\sqrt{1-\beta_1}} \sum_{t=1}^T \sum_{j=1}^{t} \sqrt{1+t-j} \beta_1^{t-j} \bbE [\|U_{j}\|^2] \notag\\
    & = \frac{3\alpha R\sqrt{1-\beta_2}}{\sqrt{1-\beta_1}} \sum_{j=1}^T \bbE [\|U_{j}\|^2] \sum_{t=j}^{T} \sqrt{1+t-j} \beta_1^{t-j} \notag\\
    & \stackrel{(a)}{\leq} \frac{3\alpha R\sqrt{1-\beta_2}}{\sqrt{1-\beta_1}} \sum_{j=1}^T \bbE [\|U_{j}\|^2] \frac{2}{(1-\beta_1)^{3/2}} \notag\\
    & \stackrel{(b)}{\leq}  \frac{6\alpha R\sqrt{1-\beta_2} }{(1-\beta_1)^{2}} \sum_{i \in [d]} \ln \left( 1+\frac{v^{(sum)}_{T,i}}{\epsilon} \right) \\
    & \stackrel{(b)}{\leq}  \frac{6\alpha R\sqrt{1-\beta_2} }{(1-\beta_1)^{2}} \sum_{i \in [d]} \ln \left( 1+\frac{R^2}{(1-\beta_2)\epsilon} \right) ,
\end{align}
where in (a) we use Fact~\ref{fact:sqq} and in (b) we use Lemma~\ref{thm3:lemma2}. For $A$, first note that,
\begin{align}\label{omega ineq}
    \Omega_t = \sqrt{\sum_{\tau=1}^t \sum_{j=1}^\tau \beta_2^{\tau-j}} \leq \sqrt{\frac{t}{(1-\beta_2)}} \leq \sqrt{\frac{T}{(1-\beta_2)}} .
\end{align}
Let us change index, $j=t-k$, and use \eqref{omega ineq}:
\begin{align*}
    \frac{\alpha}{2R}\sum_{t=1}^T \frac{1}{\Omega_t}\sum_{k = 0}^{t-1}\beta_1^k \bbE [\|G^2_{t-k}\|^2]
    & \geq \frac{\alpha \sqrt{1-\beta_2}}{2R\sqrt{T}} \sum_{t=1}^T \sum_{j=1}^t \beta_1^{t-j} \bbE [\|G_j\|^2] \\
    & = \frac{\alpha \sqrt{1-\beta_2}}{2R\sqrt{T}} \sum_{j=1}^T  \bbE [\|G_j\|^2] \sum_{t=j}^T \beta_1^{t-j} \\
    & = \frac{\alpha \sqrt{1-\beta_2}}{2(1-\beta_1)R\sqrt{T}} \sum_{j=1}^T (1-\beta_1^{T-j+1})  \bbE [\|G_j\|^2] \\
    & = \frac{\alpha \sqrt{1-\beta_2}}{2(1-\beta_1)R\sqrt{T}} \sum_{j=0}^{T-1} (1-\beta_1^{T-j})  \bbE [\|\nabla F(x_j)\|^2].
\end{align*}
Note, $\sum_{j=0}^{T-1}(1-\beta_1^{T-j})=T-\beta_1 \frac{1-\beta_1^T}{1-\beta_1}\geq T - \frac{\beta_1}{1-\beta_1}=\tilde T$, and let $\tau \in \{0, \ldots, T-1\}$ and $\mathbb{P}[\tau = j] \propto 1-\beta_1^{T-j}$, then we have:
\begin{align}\label{prf2:A}
    A \geq \frac{\alpha \sqrt{1-\beta_2}}{2(1-\beta_1)R\sqrt{T}} \tilde T \bbE [\|\nabla F(x_\tau)\|^2].
\end{align}
Inserting \eqref{prf2:B}, \eqref{prf2:C}, \eqref{prf2:D}, \eqref{prf2:A} in \eqref{eq:all_terms} yields that

\begin{align*}
    \underbrace{\frac{\alpha \sqrt{1-\beta_2}}{2(1-\beta_1)R\sqrt{T}} \tilde T \bbE [\|\nabla F(x_\tau)\|^2]}_A & \leq F(x_0)-F^* + \underbrace{\frac{\alpha^2L}{2}\frac{1}{(1-\beta_1)^2}\sum_{i \in [d]}\ln \left( 1+\frac{R^2T}{(1-\beta_2)\epsilon} \right)}_B \\ & \quad+\underbrace{\frac{\alpha^3L^2}{R\sqrt{1-\beta_2}} \frac{\beta_1}{(1-\beta_1)^4} \sum_{i\in [d]} \ln \left( 1+\frac{R^2T}{(1-\beta_2)\epsilon} \right)}_C \notag
    + \underbrace{\frac{6\alpha R\sqrt{1-\beta_2} }{(1-\beta_1)^{2}} \sum_{i \in [d]} \ln \left( 1+\frac{R^2}{(1-\beta_2)\epsilon} \right) }_D.
\end{align*}

and after some algebra we have,
\begin{align*}
\bbE [\|\nabla F(x_\tau)\|^2] & \leq \frac{2(1-\beta_1)R\sqrt{T}}{\alpha \sqrt{1-\beta_2} \tilde T}(F(x_0)-F^*) + \frac{\alpha RL\sqrt{T}  d}{ \sqrt{1-\beta_2}(1-\beta_1) \tilde T} \ln \left( 1 + \frac{R^2T}{(1-\beta_2)^2\epsilon} \right) \notag \\
& \quad + \frac{2\beta_1\alpha^2L^2\sqrt{T}d}{(1-\beta_2)(1-\beta_1)^3\tilde T} \ln \left( 1 + \frac{R^2T}{(1-\beta_2)\epsilon} \right) + \frac{12R^2\sqrt{T}d}{\sqrt{1-\beta_1}\tilde T} \ln \left( 1 + \frac{R^2T}{(1-\beta_2)\epsilon} \right),
\end{align*}
Equivalently, 
\begin{align*}
    \bbE \|\nabla F(x_\tau) \|^2 \leq \frac{2(1-\beta_1)R\sqrt{T}}{\alpha \sqrt{1-\beta_2} \tilde T}(F(x_0)-F^*) + C\frac{\sqrt{T}d}{\tilde{T}} \ln\left(1+\frac{R^2T}{(1-\beta_2)\epsilon}\right),
\end{align*}
where $C=\frac{\alpha RL}{ \sqrt{1-\beta_2}(1-\beta_1)} + \frac{2\beta_1\alpha^2L^2}{(1-\beta_2)(1-\beta_1)^3} + \frac{12R^2}{\sqrt{1-\beta_1}}$ 
which concludes the proof.

\begin{proof}[Proof of \cref{lemma3}]
    We start by separating the main term into two:
    \begin{align}\label{lemma3:first term}
        G_{t,i}\frac{m_{t,i}}{\sqrt{\epsilon + v^{(sum)}_{t,i}}} &= \sum_{k=0}^{t-1} G_{t,i} \beta_1^{t-k}\frac{g_{t-k,i}}{\sqrt{\epsilon+v^{(sum)}_{t,i}}} \\
        & =  \underbrace{\sum_{k=0}^{t-1} G_{t-k,i} \beta_1^{t-k}\frac{g_{t-k,i}}{\sqrt{\epsilon+v^{(sum)}_{t,i}}}}_{A} + \underbrace{\sum_{k=0}^{t-1} (G_{t,i}-G_{t-k,i}) \beta_1^{t-k}\frac{g_{t-k,i}}{\sqrt{\epsilon+v^{(sum)}_{t,i}}}}_{B}.
    \end{align}
    For $B$, we again utilize the fact \eqref{fact1} that $\forall \lambda>0, \ x,y \in \bbR \quad xy \leq \frac{\lambda x^2}{2} + \frac{y^2}{2\lambda}$, for each dimension with $\lambda = \frac{\sqrt{1-\beta_1}}{2R\sqrt{1-\beta_2}\sqrt{k+1}}, x=|G_{t,i}-G_{t-k,i}|, y= \frac{|g_{t-k,i}|}{\sqrt{\epsilon+v^{(sum)}_{t,i}}}$. Then, 
    \begin{align*}
        |B| \leq \sum_{i \in [d]} \sum_{k=0}^{t-1} \beta_1^t\left( \frac{\sqrt{1-\beta_1}}{4R\sqrt{1-\beta_2}\sqrt{k+1}}(G_{t,i}-G_{t-k,i})^2 +\frac{2R\sqrt{1-\beta_2}\sqrt{k+1}g^2_{t-k,i}}{\sqrt{1-\beta_1}(\epsilon+v^{(sum)}_{t-k,i})} \right).
    \end{align*}
    Note that $\epsilon+v^{(sum)}_{t,i} \geq \epsilon+v^{(sum)}_{t-k,i}$, hence,
    \begin{align*}
        \frac{g^2_{t-k,i}}{\epsilon+v^{(sum)}_{t,i}}\leq  \frac{g^2_{t-k,i}}{\epsilon+v^{(sum)}_{t-k,i}} = U^2_{t-k,i}.
    \end{align*}
    Moreover, smoothness of objective function implies
    \begin{align*}
        \|G_t - G_{t-k}\|^2 \leq L^2\|x_{t-1} - x_{t-k-1}\|^2 = L^2\| \sum_{l=1}^k \alpha u_{t-l} \|^2 \leq \alpha^2 L^2 k \sum_{l=1}^k \| u_{t-l} \|^2
    \end{align*}
    As a result,
    \begin{align}\label{lemma3:B}
        |B| & \leq \sum_{k=0}^{t-1} \frac{\alpha^2L^2\beta_1^k\sqrt{1-\beta_1}\sqrt{k}}{4R\sqrt{1-\beta_2}} \sum_{l=1}^{k} \|u_{t-l}\|^2+ \sum_{k=0}^{t-1} \frac{R\sqrt{1-\beta_2}\sqrt{k+1}}{\sqrt{1-\beta_1}}\beta_1^k\|U_{t-k}\|^2 \notag \\
        & = \frac{\alpha^2 L^2}{4R\sqrt{1-\beta_2}} \sqrt{1-\beta_1} \sum_{l=1}^{t-1}\|u_{t-l}\|^2\sum_{k=l}^{t-1}\beta_1^k\sqrt{k} + \frac{R\sqrt{1-\beta_2}}{\sqrt{1-\beta_1}}\sum_{k=0}^{t-1}\sqrt{k+1}\beta_1^k\|U_{t-k}\|^2.
    \end{align}
    For term $A$, we focus on the main term of the summation $\bbE \left[G_{t-k} \frac{g_{t-k,i}}{\sqrt{\epsilon+v^{(sum)}_{t,i}}} \right]$. Let $G=G_{t-k,i}, g=g_{t-k,i}, \tilde v = \tilde v^{(sum)}_{t,k+1,i}, v = v^{(sum)}_{t,i}$.
    Note that,
    \begin{align*}
        \tilde v - v & = v^{(sum)}_{t-k,i} +\bbE_{t-k-1} \Big[ \sum_{\tau = t-k}^t \sum_{j=1}^\tau \beta_2^{\tau-j} g_j^2\Big] - v^{(sum)}_{t,i}\\
        & = \bbE_{t-k-1} [\sum_{\tau = t-k}^t \sum_{j=1}^\tau \beta_2^{\tau-j} g_j^2] - \sum_{\tau = t-k}^t v_{\tau} \\
        & = \underbrace{\bbE_{t-k-1} [\sum_{\tau = t-k}^t \sum_{j=1}^\tau \beta_2^{\tau-j} g_j^2}_{A_1^2}] - \underbrace{ \sum_{\tau = t-k}^t \sum_{j=1}^\tau \beta_2^{\tau-j} g_j^2}_{A_2^2}.
    \end{align*}
    We continue similar to Lemma~\ref{thm3:lemma1}.
    \begin{align}\label{lemma3:separation}
        \frac{Gg}{\sqrt{\epsilon + v}} = \frac{G^2}{\sqrt{\epsilon + \tilde v}} + \underbrace{Gg \frac{A_1^2 - A_2^2}{\sqrt{\epsilon+v} \sqrt{\epsilon + \tilde v} (\sqrt{\epsilon + v}+\sqrt{\epsilon + \tilde v})}}_C.
    \end{align}
    We have
    \begin{align*}
        |C| \leq \underbrace{\frac{|Gg|A_1^2}{\sqrt{\epsilon+v}(\epsilon+\tilde v)}}_{C_1}+\underbrace{\frac{|Gg| A_2^2}{(\epsilon+v)\sqrt{\epsilon+\tilde v}}}_{C_2}.
    \end{align*}
    We will utilize \eqref{fact1}, for $C_1$, let $\lambda = \frac{\sqrt{(1-\beta_1)}\sqrt{\epsilon + \tilde v}}{2}$, $x = \frac{|G|}{\sqrt{\epsilon+\tilde v}}$, $y = \frac{|g|A_1^2}{\sqrt{\epsilon+\tilde v} \sqrt{\epsilon+v}}$. Then,
    \begin{align}
        C_1 \leq \frac{G^2}{4\sqrt{\epsilon+\tilde v}}+ \frac{1}{\sqrt{1-\beta_1}}\frac{g^2A_1^4}{(\epsilon+ \tilde v)^{\frac{3}{2}}(\epsilon+v)}.
    \end{align}
    Given $\epsilon + \tilde v \geq A_1^2$ and taking the expectation:
    \begin{align*}
        \bbE_{t-k-1} [C_1] \leq \frac{G^2}{4\sqrt{\epsilon+\tilde v}}+ \frac{1}{\sqrt{1-\beta_1}}\frac{A_1^2}{\sqrt{\epsilon + \tilde v}}\bbE_{t-k-1}\left[\frac{g^2}{(\epsilon+v)}\right].
    \end{align*}
    Similarly, for $C_2$, we let $
    \lambda = \frac{\sqrt{1-\beta_1}\sqrt{\epsilon+\tilde v}}{2A_1^2}$, $x=\frac{|GA_2|}{\sqrt{\epsilon + \tilde v}}$, $y=\frac{|A_2 g|}{\epsilon+v}$. Then,
    \begin{align*}
        C_2 \leq \frac{G^2}{4\sqrt{\epsilon + \tilde v}}\frac{A_2^2}{A_1^2} + \frac{1}{\sqrt{1-\beta_1}}\frac{A_1^2}{\sqrt{\epsilon + \tilde v}}\frac{g^2 A_2^2}{(\epsilon + v)^2}.
    \end{align*}
    Using $\epsilon+v\geq A_2^2$ and $\bbE_{t-k-1}[\frac{A_2^2}{A_1^2}]=1$,
    \begin{align*}
        \bbE_{t-k-1} [C_2] \leq \frac{G^2}{4\sqrt{\epsilon + \tilde v}} + \frac{1}{\sqrt{1-\beta_1}}\frac{A_1^2}{\sqrt{\epsilon + \tilde v}}\bbE_{t-k-1}\left[\frac{g^2}{(\epsilon + v)}\right].
    \end{align*}
    Hence, 
    \begin{align*}
        \bbE_{t-k-1} [|C|] \leq \frac{G^2}{2\sqrt{\epsilon + \tilde v}} + \frac{1}{\sqrt{1-\beta_1}}\frac{2A_1^2}{\sqrt{\epsilon + \tilde v}}\bbE_{t-k-1}\left[\frac{g^2}{(\epsilon + v)}\right].
    \end{align*}
    Using $A_1 \leq \sqrt{\epsilon+\tilde v}$, thus, $A_1 \leq R\sqrt{k+1}\sqrt{1-\beta_2}$:
    \begin{align*}
        \bbE_{t-k-1} [|C|] \leq \frac{G^2}{2\sqrt{\epsilon + \tilde v}} + \frac{2R\sqrt{k+1}\sqrt{1-\beta_2}}{\sqrt{1-\beta_1}}\bbE_{t-k-1}\left[\frac{g^2}{(\epsilon + v)}\right].
    \end{align*}
    Taking complete expectation, using $\epsilon+v^{(sum)}_{t,i} \geq \epsilon+v^{(sum)}_{t-k,i}$ and reintroducing the indices:
    \begin{align*} 
        \bbE [|C|] \leq \frac{1}{2}\bbE \left[\frac{G_{t-k,i}^2}{\sqrt{\epsilon + \tilde v^{(sum)}_{t,k+1,i}}} \right] + \frac{2R\sqrt{k+1}\sqrt{1-\beta_2}}{\sqrt{1-\beta_1}}\bbE_{t-k-1}\left[\frac{g_{t-k,i}^2}{(\epsilon + v^{(sum)}_{t-k,i})}\right].
    \end{align*}
    Equivalently,
    \begin{align} \label{lemma3:C}
        \bbE [-|C|] \geq -\frac{1}{2}\bbE \left[\frac{G_{t-k,i}^2}{\sqrt{\epsilon + \tilde v^{(sum)}_{t,k+1,i}}} \right] - \frac{2R\sqrt{k+1}\sqrt{1-\beta_2}}{\sqrt{1-\beta_1}}\bbE_{t-k-1}\left[\frac{g_{t-k,i}^2}{(\epsilon + v^{(sum)}_{t-k,i})}\right]
    \end{align}
    Note,
    \begin{align*}
        \bbE [|A|] & \geq \sum_{i \in [d]}\sum_{k=0}^{t-1}\beta_1^k\left( \bbE \left[\frac{G_{t-k,i}^2}{\sqrt{\epsilon + \tilde v^{(sum)}_{t,k+1,i}}} \right] +\bbE [-|C|] \right)
    \end{align*}
    Then, inserting \eqref{lemma3:C}, we have:
    \begin{align}\label{lemma3:A}
        \bbE [|A|] & \geq \sum_{i \in [d]}\sum_{k=0}^{t-1}\beta_1^k\left( \bbE \left[\frac{G_{t-k,i}^2}{\sqrt{\epsilon + \tilde v^{(sum)}_{t,k+1,i}}} \right] - \left( \frac{1}{2}\bbE \left[\frac{G_{t-k,i}^2}{\sqrt{\epsilon + \tilde v^{(sum)}_{t,k+1,i}}} \right] + \frac{2R\sqrt{k+1}\sqrt{1-\beta_2}}{\sqrt{1-\beta_1}}\bbE_{t-k-1}\left[\frac{g^2}{(\epsilon + v^{(sum)}_{t-k,i})}\right]  \right) \right) \notag \\
        & = \frac{1}{2} \left( \sum_{i \in [d]}\sum_{k=0}^{t-1}\beta_1^k\bbE \left[\frac{G_{t-k,i}^2}{\sqrt{\epsilon + \tilde v^{(sum)}_{t,k+1,i}}} \right]\right)  - \frac{2R\sqrt{1-\beta_2}}{\sqrt{1-\beta_1}}\sum_{k=0}^{t-1}\beta_1^k\sqrt{k+1}\bbE [\|U_{t-k}\|^2].
    \end{align}
    Note, in \eqref{lemma3:first term} we have,
    \begin{align*}
        G_{t,i}\frac{m_{t,i}}{\sqrt{\epsilon + v^{(sum)}_{t,i}}} = A + B \geq A - |B|,
    \end{align*}
    taking the expectation of both sides yields
    \begin{align} \label{lemma3:intermediate}
        \bbE \left[ G_{t,i}\frac{m_{t,i}}{\sqrt{\epsilon + v^{(sum)}_{t,i}}} \right] = \bbE[A] + \bbE[B] \geq \bbE[A] + \bbE[-|B|],
    \end{align}
    Inserting \eqref{lemma3:A} and negated \eqref{lemma3:B} into \eqref{lemma3:intermediate} results in
    \begin{align*}
    \bbE \left[ G_{t,i} \frac{m_{t,i}}{\sqrt{\epsilon+v^{(sum)}_{t,i}}} \right] & \geq \frac{1}{2} \left( \sum_{i \in [d]}\sum_{k=0}^{t-1} \beta_1^k \bbE \left[ \frac{G^2_{t-k,i}}{\sqrt{\epsilon+\tilde v^{(sum)}_{t,k+1,i}}} \right] \right) - \frac{3R\sqrt{1-\beta_2}}{\sqrt{1-\beta_1}} \sum_{k=0}^{t-1} \sqrt{k+1} \beta_1^k \bbE \| U_{t-k} \|^2 \notag \\
    & \quad - \frac{\alpha^2L^2}{4R\sqrt{1-\beta_2}}\sqrt{1-\beta_1} \sum_{l=1}^{t-1}\| u_{t-l}\|^2 \sum_{k=l}^{t-1} \beta_1^k \sqrt{k},
    \end{align*}
    which completes the proof.
\end{proof}

% \begin{lemma}\label{lemma4}
%     Assume, $0 \leq \beta_1 < 1$, $0 \leq \beta_2 < 1$ and we have sequence of real numbers $(a_t)_{t\in[T]}$. Let $c_t = \sum_{\tau = 1}^t \beta_1^{t-\tau}a_t$, $b_t = \sum_{\tau = 1}^t\sum_{j=1}^\tau \beta_2^{\tau-j}a^2_t$. Then, 
%     \begin{align}
%         \sum_{t=1}^T \frac{c_t^2}{\epsilon+b_t} \leq \frac{1}{(1-\beta_1)^2}\ln{\left( 1+\frac{b_T}{\epsilon}\right)}
%     \end{align}
% \end{lemma}
\begin{proof}[Proof of Lemma~\ref{lemma4}]
    For some $t$,
    \begin{align*}
        \frac{c_t^2}{\epsilon+b_t} \leq \frac{1}{1-\beta_1}\sum_{\tau =1}^t \beta_1^{t-\tau} \frac{a_{\tau}^2}{\epsilon+b_t}.
    \end{align*}
    We have $\epsilon+b_t \geq \epsilon + b_\tau$ for any $\tau \leq t$, then,
    \begin{align*}
        \sum_{t=1}^T \frac{c_t^2}{\epsilon+b_t} & \leq \frac{1}{1-\beta_1}\sum_{t=1}^T \sum_{\tau =1}^t \beta_1^{t-\tau} \frac{a_{\tau}^2}{\epsilon+b_\tau} \\
        & = \frac{1}{1-\beta_1} \sum_{\tau = 1}^T \frac{a_{\tau}^2}{\epsilon+b_\tau} \sum_{\tau =1}^t \beta_1^{t-\tau} \\
        & \leq \frac{1}{(1-\beta_1)^2} \sum_{\tau = 1}^T \frac{a_{\tau}^2}{\epsilon+b_\tau} \\
        & \leq \frac{1}{(1-\beta_1)^2}\ln{\left( 1+\frac{b_T}{\epsilon}\right)},
    \end{align*}
    where in the last inequality we apply Lemma~\ref{thm3:lemma2} with $\rho = 0$. Note, from the definition of $b_T$ we also have that $b_T \leq \frac{R^2T}{1-\beta_2}$.
\end{proof}

\section{Additional Experiments and Details}\label{app:experiments}

\subsection{Overall Parameterization in the Experiments}

In Section~\ref{sec:experiments}, we provided the overall parameterization verbally, here we give the vector updates for the overall parameterization. For the \emph{first order moment term} we have, 
\begin{align*}
    m_t &=\beta_{1} m_{t-1} + (1-\beta_{1})g_t \\
    n_t &= \beta_{3}n_{t-1} + (1-\beta_3)(g_t-g_{t-1})\\
    m_t^{lion} &= \beta_2^{lion} m_{t-1}^{lion} + (1-\beta_2^{lion})g_t \\
    u_t &= \beta_1^{lion} m_{t-1}^{lion} + (1-\beta_1^{lion})g_t ,
\end{align*} 
where $\beta_1$ controls the strength of heavy-ball momentum and $\beta_3$ extends the equation to Nesterov momentum. The first order moment equations essentially capture the updates for Adan and Adam. For the \emph{second order moment term} we have,
\begin{align*}
    \hat g_t &= g_t + \beta_{3}(g_t - g_{t-1}) \\
    \tilde g_t^2 &= c_{t-1} \hat g_t^2 + (1-c_{t-1}) ( \bar v_{t-1}+\hat g^2_t \text{sign}(\hat g_t^2 - \bar v_{t-1})) \\
    \bar v_t &= \beta_{2} \bar v_{t-1} + (1-\beta_{2})\tilde g_t^2 \\
    \tilde v_t &= (\bar v_t + (t-1)\tilde v_{t-1})/t \\
    v_t &= \rho \bar v_t + (1-\rho) \tilde v_t
\end{align*}
where $c$ controls whether the difference of consequent $\bar v_t$s grows with $g_t^2\text{sign}(g_t^2-\bar v_{t-1})$, as in YOGI, or $g_t^2(g_t^2-\bar v_{t-1})$ as in Adam; the former, prevents rapid increases in effective learning rate and provides more controlled updates. For the second order moment term, $\rho_t$ determines whether to use, less noisy, average of past $v_t$s (we call this method AVGrad and defer its formal introduction to the next section) or the current $v_t$, which may be more noisy but up to date. Lastly, the \emph{update term} together with the decoupled weight decay is as follows,
\begin{align*}
    &x_{t-1} = x_{t-1} - \lambda \alpha_t x_{t-1} \\
     &x_t = x_{t-1} {-} \alpha_t \Big( \gamma \frac{m_t + \beta_{3}n_t}{\sqrt{v_t} + \epsilon} + (1-\gamma)\text{sign}(u_t) \Big), \notag
\end{align*}
where $\epsilon$ is the stability parameter, $\text{sign}(u_t)$ is the element-wise sign operator, $\gamma$ controls whether to do a Lion type update or not.

\subsection{Training setup for the experiments} 
On OpenWebText, we use a global batch size of 480 sequences, cosine learning rate schedule with the peak learning rate of $6 \times 10^{-4}$ ($1.5 \times 10^{-4}$ for Lion) and the final learning rate of $1.5 \times 10^{-5}$ (as chosen for Sophia algorithm in \citep{liu2023sophia}). \kaancr{ For Lion we use the tuned learning rate from (Liu, 2023) for the same setting. For Adam we found that employing Sophia's learning rate schedule results in better loss compared to employing Adam's learning rate schedule in \citep{liu2023sophia}. For our methods: AVGrad, \Mada, \texttt{MADA-FS}, we directly employ Adam's learning rate and learning schedule without any tuning since our goal is to demonstrate that \Mada can be plugged in place of Adam without any changes in learning rate schedule.}

We run the experiment for 100,000 iterations (first 2000 are warmup iterations) which corresponds to training on $\sim 492$B tokens. We use a weight decay parameter of $0.1$. On Shakespeare, we use a batch size of 64, cosine learning rate schedule with the peak learning rate of $10^{-3}$ and the final learning rate of $10^{-4}$. We run the experiment for 5,000 iterations (first 100 are warmup iterations). We run our experiments on AWS \texttt{p5.48xlarge} instances equipped with 8 NVIDIA H100 GPUs. To be able utilize multiple GPU's we allreduced hyper-gradients across GPU's.

We initialize \Mada with $\beta_3 = 0.9, \rho=0$ on top of Adam's established $\beta_1,\beta_2$ parameters for OpenWebText experiments as it resulted in a better validation loss. For Shakespeare experiments, we compare \Mada against Adam over a grid of $(\beta_1, \beta_2)$ values, where in the case of \Mada $(\beta_1, \beta_2)$ represents the initial values. For the OpenWebText experiments, we do not update hyper-parameters for the first 50 iterations for the sake of stability. For both experiments we use SGD as the hyper-parameter optimizer. On Shakespeare, for 10M model we use $2.5e{-}3$ learning rate and $0.5$ momentum (we do not use momentum for $\gamma$) for learning the hyper-parameters. On OpenWebText (and for 1.5B model on Shakespeare) we use a learning rate of $5e{-}4$ for training $\beta_1,\beta_2$ and $1e{-}1$ for other hyper-parameters.

\textbf{Vision Tasks.} For 5-layer model experiments, we use a CNN whose first two layers are convolutional layers with 6 and 16 output channels and 5 kernel size; and last 3 layers are fully connected layers with 120, 84, and 10 output dimensionality. We use ReLU activation in all layers except the last one and maxpool on the outputs convolutional layers. We use a constant learning rate of $1e{-}3$ for MADA, Adam, HyperAdam; and $1e{-}2$ learning rate and 0.9 momentum coefficient for SGD with momentum. We initialize \Mada from Adam state, we set hyper learning rate for $\beta_1, \beta_2$ to $1e{-}4$ and for the other variables to $1e{-}2$. We use batch size of 256 and train for 50 epochs. For ResNet-9 experiments, we use the model implementation from \url{https://github.com/Moddy2024/ResNet-9}. We use one cycle learning rate scheduler with $1e{-2}$ peak learning rate for \Mada, Adam, HyperAdam and $1e{-1}$ peak learning rate for SGD with momentum. Again we initialize \Mada from Adam, we set hyper learning rate for $\beta_1$ to $1e{-}4$, for $\beta_2$ to $1e{-}4$ and for the other variables to $1e{-}2$. We use batch size of 400 and train for 50 epochs.

\subsection{Additional Experiments}
\begin{figure}[h]
\centering
    \begin{tabular}{ccccc} \hline
  Method & OpenWebText (validation loss) & OpenWebText & Wikitext & Lambada \\ \hline
  \Mada  & 2.8853 & 17.9084 & 61.4689 & 73.1291 \\ 
  \Mada-FS  & 2.8838 & 17.8822 & 63.9886 &  75.6158\\ \hline
  \end{tabular}
  \captionof{table}
      {%
        Validation loss on OpenWebText and validation perplexities on OpenWebText, Wikitext and Lambada datasets of GPT-2 (125M) models trained on OpenWebText with \Mada when the initial optimizer state is AVGrad.%
      }
\end{figure}

\begin{figure}[h]
\centering
    \begin{tabular}{ccccc} \hline
  Method & OpenWebText (validation loss) & OpenWebText & Wikitext & Lambada \\ \hline
  Adam  & 2.6527 & 14.1928 & 50.1410 & 53.7468 \\ 
  \Mada & 2.6422 & 14.0441 & 43.8067 &  53.7575\\ \hline
  \end{tabular}
  \captionof{table}
      {%
        Validation loss on OpenWebText and validation perplexities on OpenWebText, Wikitext and Lambada datasets of GPT-2 (355M) models trained on OpenWebText with Adam and \Mada. The initial state of \Mada is AVGrad + Adan ($\rho=0,\beta_3 = 0.9$).%
      }
\end{figure}

In these tables, we observe \Mada outperforms Adam in training of GPT-2 (medium) as well. Moreover, we observe GPT-2 (small) training with different initialization (from AVGrad).\\

\textbf{Synthetic convex experiment.} This is a famous example from \citep{reddi2018}, where Adam notably fails to converge to the optimal solution, $x=-1$; whereas AMSGrad (and AVGrad) reach the optimum. The online learning experiment given in \citep{reddi2018} to motivate AMSGrad is as follows:
\begin{align}
g_t(x) := \begin{cases}        1010x & \text{for } t \text{ mod } 101=1 \\       -10x & \text{otherwise}     \end{cases} 
\end{align}
with constraint set $x \in [-1,1]$ and $t$ denotes the time index in the online learning setting. In Figure~\ref{fig:amsgradrho}, we plot the average regret which is defined by $(g_t(x)-g_t(-1))/t$, as well as the evolution of $x$ and $\rho$ with respect to iterations. Here, we reduce the effect of Lion by assigning a small hyper-learning rate to $\gamma$, since signSGD based methods neutralize the effect of large gradients which allow faster progress towards the optimum. We observe that even when we initialize \Mada from Adam ($\rho_0 = 1$), it quickly recovers AVGrad ($\rho_t \rightarrow 0$), which is the right optimizer to use for this example. This experiment shows that \Mada can learn to behave like AVGrad even when initialized from Adam.
\begin{figure}[h]
    \centering
    \includegraphics[scale = 0.35]{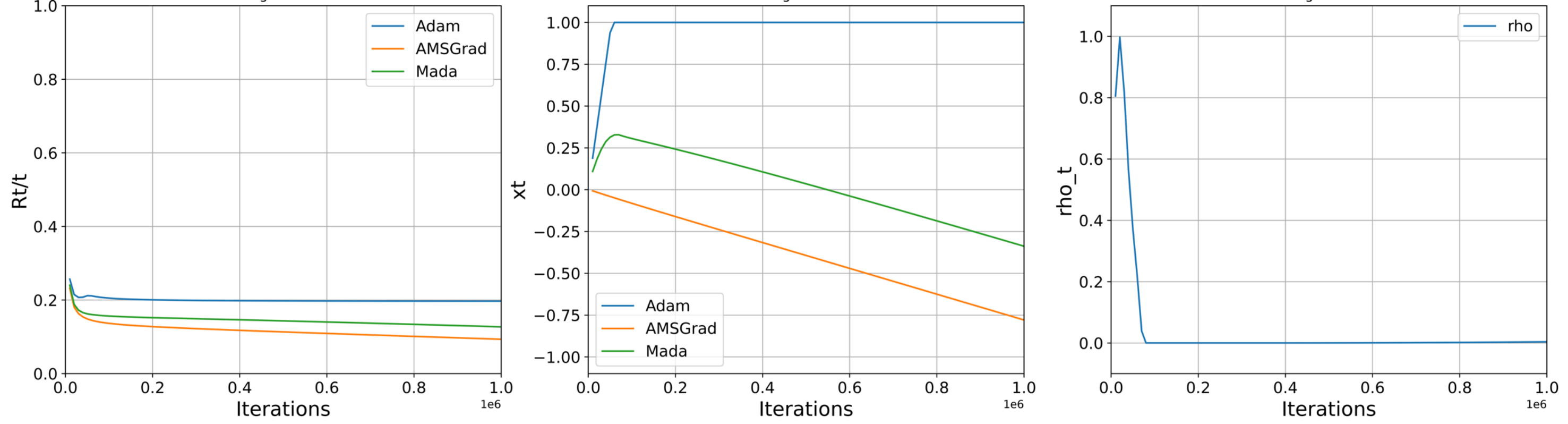}
    \caption{Average regret, $x_t$, $\rho_t$ with respect to iterations for \Mada on simple convex function.}
    \label{fig:amsgradrho}
\end{figure}

\section{Details on Computational Burden} \label{app:computation}

As mentioned in the main text, the additional computation can be broken down into two components: additional per-parameter computational steps during optimizer update, and the computation of hyper-gradients. Note that the first component is negligible compared to the overall FLOPs requirement of a single training step in a language model. This is because for a model with $N$ parameters trained over a batch of $T$ tokens, the model parameter update in the parameterized optimizer involves $cN$ FLOPs (with $c$ being on the range of 10-20 depending on the parameterization), while the forward-backward passes require approximately $6TN$ FLOPs, with $T$ being on the order of thousands. To analyze the second component, consider the hyper-gradient example in \eqref{eq:hypergrad}, where the computation of hyper-gradient with respect to $\rho$ involves several vector-level element-wise operations (note that the multiplication of the first term with second does not require a matrix-vector multiplication, since it can be implemented by a dot product with the numerator followed by element-wise division with the denominator), where each vector is of size $N$. As a result, the FLOPs requirement is still $O(N)$ (does not scale with $T$), and much smaller than the $6TN$ required for the forward-backward passes. The derivative with respect to the other coefficients can similarly be shown to have $O(N)$ complexity. \\
We also profile the memory footprints of the methods and find that the peak memory usage is 15.5 GB for Adam, and 24.9 GB for \Mada; time per iteration is 0.65s for Adam and 0.85s for \Mada. If we exclude LION (which contributes the least in this setting) MADA results in 22.2GB of memory usage and 0.72s time per iteration.

\end{document}